\documentclass[10pt,twocolumn]{article}

\usepackage[a4paper,margin=2cm]{geometry}
\usepackage{times}

\usepackage[colorlinks,bookmarksopen,bookmarksnumbered,citecolor=red,urlcolor=red]{hyperref}
\usepackage{graphicx}
\usepackage{amsmath,amssymb,bm}
\usepackage{amsthm}
\usepackage{multirow}
\usepackage{subcaption}
\usepackage{url}
\usepackage{hyperref}
\usepackage{ishigaki}
\usepackage{tcolorbox}
\usepackage{booktabs}

\newtheorem{lemma}{Lemma}
\theoremstyle{definition}

\theoremstyle{remark}

\title{Structured Jacobian Construction for Motion Optimization with High-Order Time Derivatives in Multi-Link Systems}

\author{Taiki Ishigaki, Ko Ayusawa and Eiichi Yoshida}

\author{
Taiki Ishigaki \\
Tokyo University of Science \\
\texttt{taiki.ishigaki@rs.tus.ac.jp}
\and
Ko Ayusawa \\
National Institute of Advanced Industrial Science and Technology
\and
Eiichi Yoshida \\
Tokyo University of Science
}
\date{}

\begin{document}

\twocolumn[
\maketitle
\begin{abstract}
This paper presents a novel framework for Jacobian computation in motion optimization problems involving multi-link systems, where physical quantities are represented using higher-order time derivatives.
In motion optimization of robots and humans, cost functions may incorporate higher-order time derivatives, such as jerk or the time variation of forces, to capture smoothness and perceptual characteristics, particularly in the analysis of motion skills and expressive behaviors, thereby necessitating Jacobian computations involving these quantities.
However, such Jacobians are typically computed using numerical or automatic differentiation without explicitly exploiting the underlying multi-link structure, which can lead to increased computational cost and potential numerical instability.
To address this limitation, we propose a structured Jacobian formulation for motion optimization, based on the comprehensive motion computation framework, in which physical quantities and their higher-order time derivatives are systematically represented along the multi-link structure.
The proposed method systematically derives analytical expressions for Jacobians of kinematic and dynamic quantities, including momentum, forces, and joint torques, with respect to generalized coordinates and their higher-order derivatives. The resulting framework is applicable to both direct and inverse optimization.
Through numerical experiments, we demonstrate that the proposed method improves computational efficiency compared to numerical and automatic differentiation, while achieving comparable accuracy. Furthermore, we demonstrate its effectiveness in inverse optimization by recovering cost function weights from motion data.
Together, these results indicate that the proposed formulation provides a scalable and structured computational foundation for motion optimization involving higher-order time derivatives in multi-link systems.
\end{abstract}
\vspace{1em}
]

\section{Introduction}



\begin{figure*}[htp]
    \centering    
    \includegraphics[width=\linewidth]{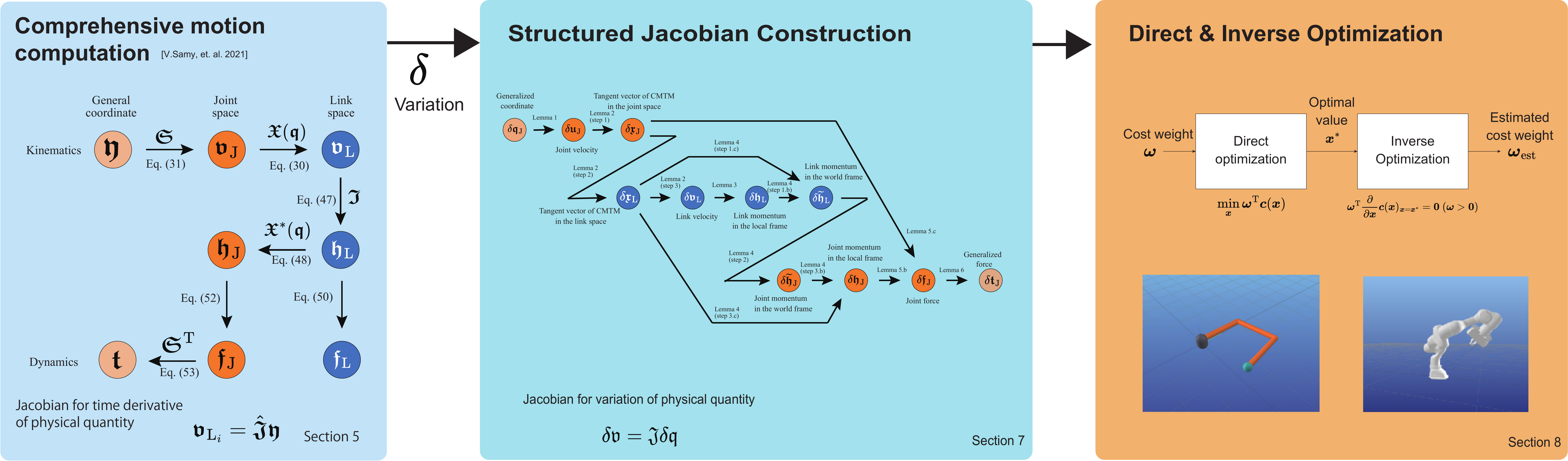}
    \caption{Overview of the structured variation of motion computation.}
    \label{fig:overview}
\end{figure*}

Robot motion computation, which deals with the motion of systems with physical bodies, has significantly contributed to the field of biomechanics \cite{nakamura2005msmodel}. 
A representative example is the use of motion capture to measure human motion and estimate joint and muscle loads using computational human models \cite{delp2007opensim,murai2010musculoskeletal}. 
In parallel, considerable research has been conducted in robotics to understand how humans control their own bodies \cite{alexander1984gaits,flash1985coordination}. 
These studies are typically based on the hypothesis that human motion is governed by an underlying movement principle (i.e., a cost function), and attempt to infer this principle from observed motion data.

To estimate such movement principles, inverse optimal control methods from systems engineering \cite{kalman1964linear} have been widely applied \cite{mombaur2010human,jin2021ioc,bevcanovic2022force,shimizu2024ioc}. 
Unlike learning-based approaches such as reinforcement learning, these methods explicitly handle interpretable cost functions, making them suitable for understanding and predicting human motion.

The inverse optimal control problem assumes that a system operates according to optimal control, and seeks to infer the cost function or control inputs that reproduce observed motion trajectories. 
Several efficient solution methods have been proposed, including the inverse-KKT approach \cite{engler2017ikkt}. 
Although these problems are traditionally studied within the framework of optimal control, closely related formulations also arise in broader optimization contexts. 
In this paper, we adopt the terminology of direct and inverse optimization to refer to these problems.

In conventional formulations for dynamical systems, the cost function typically consists of physical quantities such as position, velocity, acceleration, and force. 
However, when modeling human motion, it is often necessary to consider additional criteria such as smoothness and perceptual qualities. 
As a result, recent studies have incorporated higher-order time derivatives, such as jerk and snap, into the cost function to better capture these characteristics \cite{shimizu2024ioc,berret2011evidence,lin2016human,medrano2019ioc}.

Despite their importance, the systematic computation of Jacobians involving higher-order time-derivative quantities in a structure-aware manner remains limited in conventional robot motion computation. 
In practice, numerical differentiation, automatic differentiation, or analytical derivatives of dynamics algorithms are commonly used. 
While these approaches are effective in many cases, their application to optimization problems involving higher-order time derivatives, particularly in inverse optimization, has not been fully explored. 
Consequently, many existing studies focus on low-dimensional problems such as center-of-mass motion \cite{lin2016human,medrano2019ioc}, and their extension to complex multi-link systems involving higher-order time-derivative quantities remains limited.

On the other hand, recent advances in motion optimization theory have introduced efficient computational frameworks for differentiating the equations of motion with respect to generalized coordinates, velocities, and accelerations \cite{ayusawa2018comprehensive,carpentier2018analytical,singh2022efficient}.
In addition, methods for deriving higher-order time derivatives of motion trajectories and forces have been proposed \cite{samy2021generalized,kumar2021nth,muller2021closed}.
While these methods successfully enable the computation of higher-order time-derivative quantities, they primarily focus on the forward generation of such derivatives.
However, in optimization, what is required is the computation of Jacobians with respect to these higher-order time-derivative variables, which is not explicitly addressed in existing approaches.
As a result, Jacobians involving higher-order quantities are often computed using numerical or automatic differentiation, without fully leveraging the underlying multi-link structure.
In particular, existing approaches do not provide a structure Jacobian computational framework that systematically integrates higher-order time-derivative modeling, multi-link structure, and Jacobian computation for optimization.

In this paper, we propose a structured Jacobian formulation for motion optimization in multi-link systems, based on a comprehensive motion computation framework. 
In the proposed formulation, physical quantities and their higher-order time derivatives are systematically represented and propagated along the multi-link structure via a composition of mappings, enabling direct and structured Jacobian computation with respect to higher-order time-derivative variables in both direct and inverse optimization. 
Figure~\ref{fig:overview} provides an overview of the proposed framework, illustrating how Jacobians arise from structured variations of motion.

\section{Related Work}

The computation of derivatives of kinematic and dynamic quantities has been extensively studied in robotics, particularly in the context of optimization-based motion generation. 
In addition to the evaluation of forward and inverse dynamics, many applications require derivatives of these quantities with respect to joint states, including Jacobians, their time derivatives, and Jacobians of dynamic equations, which are defined over the multi-link structure of articulated systems.

A major line of research focuses on the analytical differentiation of rigid-body dynamics algorithms.
In this approach, classical recursive algorithms such as the recursive Newton–Euler algorithm (RNEA) and the articulated-body algorithm (ABA) are explicitly differentiated to obtain Jacobians of dynamic quantities with respect to joint positions, velocities, and accelerations.
Several works have developed systematic formulations for differentiating these algorithms by exploiting their recursive structure, sparsity, and spatial algebra representations \cite{carpentier2018analytical,paz2024analytical,singh2022efficient}.
These approaches enable efficient derivative computation with linear complexity in the number of degrees of freedom and have been implemented in software libraries such as Pinocchio \cite{carpentier2019pinocchio}.

Beyond direct differentiation of recursive algorithms, related analytical approaches have been developed from various perspectives, including spatial operator algebra \cite{jain1993linearization}, geometric formulations on Lie groups \cite{bos2022efficient}, and sensitivity analysis methods \cite{zhakatayev2023recursive}.
These approaches provide structured representations of linearized dynamics and enable efficient recursive or adjoint-based computation of derivatives of dynamic quantities.

More recently, such analytical formulations have been extended to higher-order derivatives, enabling the use of second-order optimization methods such as DDP and SQP \cite{singh2024second}.

In contrast to approaches based on direct differentiation, another line of work introduces unified representations that embed differential kinematics and dynamics within an extended formulation.
In particular, \cite{ayusawa2018comprehensive} proposed a comprehensive motion transformation matrix (CMTM), which extends conventional spatial transformations to simultaneously represent positions, velocities, and accelerations within a single matrix framework.
This formulation enables differential kinematics and dynamics to be expressed as compositions of mappings, allowing analytical computation of derivatives of various physical quantities in a structured manner.
Building upon this formulation, \cite{samy2021generalized} extended the representation to higher-order time derivatives and derived analytical Jacobians of higher-order time-derivative quantities.
However, these Jacobians are primarily designed to evaluate the time derivatives of physical quantities themselves, and are not directly formulated for systematic Jacobian computation in optimization pipelines.

Higher-order time derivatives of motion and dynamics have also been studied in various contexts, particularly for applications requiring smooth trajectories or higher-order dynamic consistency.
From an optimization perspective, such derivatives appear in cost functions involving jerk, snap, or higher-order quantities.
While existing approaches enable the computation of these quantities or their local derivatives, they are typically developed for specific formulations and do not explicitly address their structured integration within Jacobian computation for optimization.

In parallel, automatic differentiation and differentiable simulation frameworks have been widely adopted to compute derivatives of kinematic and dynamic quantities in a generic manner \cite{newbury2024review}. 
These approaches provide flexibility and ease of implementation, allowing arbitrary models to be differentiated without deriving analytical expressions. 
However, they do not explicitly exploit the underlying multi-link structure, which can lead to increased computational cost and reduced numerical robustness, particularly for higher-order derivatives.

Overall, existing research has addressed individual aspects of derivative computation, including analytical differentiation of dynamics algorithms, unified representations of differential quantities, and higher-order time-derivative modeling.
However, a systematic computational framework that integrates these aspects to enable structured and efficient Jacobian computation for optimization involving higher-order time-derivative physical quantities over multi-link systems remains limited.
\section{Direct / Inverse Optimization\label{sec:dioc}}
The direct optimization problem is formulated as finding the optimization variable $\bm{x}$ that minimizes the following objective function
\begin{align}
    \label{opt_problem}
 \min_{\bm{x}} f(\bm{x}).
\end{align}
In this paper, the objective function $f$  is expressed as a linear combination of predefined sub-cost functions $c_i$ and their associated weights $w_i$, as follows
\begin{align}
    \label{eq:opt_func}
   f(\bm{x}) = f(\bm{x},\bm{w}) 
   =
   \bm{w}^{\trans} \bm{c}(\bm{x})
   =
   \sum_i w_i c_i(\bm{x}).
\end{align}
Here, the vectors $\bm{w}$ and $\bm{c}(\bm{x})$ denote the stacked representations of the weights and the sub-cost functions, respectively. The necessary condition for the optimal solution of \eqref{opt_problem} is given by the Karush–Kuhn–Tucker (KKT) conditions as follows
\begin{align}
\label{kkt_cond}
\frac{\partial f(\bm{x},\bm{w}) }{\partial \bm{x}}
=
\bm{0}.
\end{align}

On the other hand, assuming that the observed variable $\bm{x}$ is a solution to the direct optimization problem, the task of estimating the underlying objective function is referred to as the inverse optimization problem. 
In most cases, the sub-cost functions $\bm{c}$ are assumed to be known, and the problem is formulated as estimating the weight vector $\bm{w}$. In particular, when the observed variable $\bm{x}$ is assumed to satisfy the KKT conditions in \eqref{kkt_cond}, the method for estimating the weights $\bm{w}$ is called the inverse-KKT method \cite{engler2017ikkt}.

Here, the left-hand side of \eqref{kkt_cond} can be rewritten as follows
\begin{align}
    \frac{\partial f(\bm{x},\bm{w})}{\partial \bm{x}}
    =
    \sum_i w_i 
    \frac{\partial c_i(\bm{x})}{\partial \bm{x}}
    =
    \bm{w}^{\trans} \frac{\partial \bm{c}}{\partial \bm{x}}
    =
    \bm{w}^{\trans} \bm{C}
\end{align}
where $\bm{C}$ denotes the matrix whose columns consist of the Jacobian vectors of the sub-cost functions $c_i$.
In the inverse-KKT method, the weight vector $\bm{w}$ is estimated so that it satisfies the KKT condition in \eqref{kkt_cond}. 
Substituting the expression of the Jacobian, this condition can be written as
\begin{align}
    \bm{w}^{\trans} \bm{C} = \bm{0}.
\end{align}
The weight vector $\bm{w}$ is then obtained by solving this homogeneous linear equation under normalization and nonnegativity constraints as
\begin{align}
\label{ikkt}
\text{find } \bm{w} \quad \text{s.t.} \quad 
\bm{w}^{\trans} \bm{C} = \bm{0}, \quad 
\bm{w} \geq \bm{0}, \quad 
\bm{w}^{\trans} \bm{1} = 1.
\end{align}

When a human or a robot is modeled as a multi-link system, the optimization problem for motion typically treats the variable $\bm{x}$ as a time series of joint angles (generalized coordinates). The cost functions are then designed as $\bm{c}(\bm{y}(\bm{x}))$, where $\bm{y}(\bm{x})$ denotes the dynamical variables such as link positions and orientations, spatial velocities, spatial accelerations, and joint torques. Consequently, the computation of the Jacobian matrix $\bm{C}$ requires evaluating the Jacobians of the dynamic variables with respect to the generalized coordinate trajectory, i.e., $\partial \bm{y} / \partial \bm{x}$. The method for computing such Jacobians of general dynamic variables appearing in the equations of motion of multi-link systems is described in \cite{ayusawa2018comprehensive}.




However, in cost functions commonly used for human motion analysis \cite{lin2016human}, higher-order time-derivative variables—such as jerk of the link motion or the time variation of joint torques are included in addition to the dynamical variables described above, in order to evaluate smoothness and perceptual quality of motion.

The computation of higher-order time-derivative variables in multi-link systems has been derived in \cite{samy2021generalized, kumar2021nth}.
These methods obtain variables of arbitrary order by repeatedly applying time differentiation.
However, the Jacobians required in optimization are mathematical Jacobians with respect to higher-order time-derivative variables, and cannot be obtained directly through such repeated differentiation.
In particular, the formulations in \cite{samy2021generalized, kumar2021nth} focus on generating higher-order derivatives themselves, rather than providing the corresponding Jacobian mappings needed for optimization.

In the following sections, we describe the theory of higher-order time-derivative computations for multi-link systems, and in Section \ref{sec:stru_jacob}, we provide derivations of the mathematical Jacobians of representative variables $\bm{y}$ with respect to generalized coordinates and their higher-order time derivatives.
\section{Mathematical Preliminary}

\subsection{Matrix representation for vector operator}

For an arbitrary three-dimensional vector $\bm{u} \in \mathbb{R}^3$,
$\adjrot{\bm{u}}$ denotes the corresponding skew-symmetric matrix representing the cross product.
For an arbitrary six-dimensional vector 
$\bm{u} = [\bm{v}_0^{\text{T}} \ \bm{v}_1^{\text{T}}]^{\text{T}} \in \mathbb{R}^{6} \ (\bm{v}_0, \bm{v}_1 \in \mathbb{R}^3)$,  
we define six-dimensional matrix operator
\begin{align}
    \adjsp{\bm{u}} := 
    \begin{bmatrix}
          \adjrot{\bm{v}_0} & \bm{O}_3 \\
          \adjrot{\bm{v}_1} & \adjrot{\bm{v}_0}
    \end{bmatrix}
    \in
    \mathbb{R}^{6 \times 6}
    .
\end{align}
To generalize the above operation, 
we construct a vector obtained by stacking $k+1$ arbitrary $m$-dimensional vectors as follows.
For 
$\bm{u} = [\bm{u}_0^{\text{T}} \cdots \bm{u}_k^{\text{T}}]^{\text{T}} \in \mathbb{R}^{m(k+1)}$ 
with $(\bm{u}_0, \ldots, \bm{u}_k \in \mathbb{R}^m)$,  
we define
 \begin{align}
\adj{\bm{u}}{m\cdot k} 
&:=
\begin{bmatrix}
    \adj{\bm{u}_0}{m} & \bm{O} & \cdots & \bm{O} \\
    \adj{\bm{u}_1}{m} & \adj{\bm{u}_0}{m} & \ddots & \vdots \\
    \vdots & \ddots & \ddots & \vdots \\
    \adj{\bm{u}_k}{m} & \adj{\bm{u}_{k-1}}{m} & \cdots & \adj{\bm{u}_0}{m} 
\end{bmatrix}.
\end{align}
In the special case where $m = 3$ and $k = 1$, we define  
$\adjsp{\bm{u}} := \adj{\bm{u}}{3\cdot1}$.

For all operators $[\ast \times_n]$ in this paper, we define a corresponding operator $[\ast \hat{\times}_n]$ such that
\begin{align}
[\bm{x} \hat{\times}_n] \bm{y} = [\bm{y} \times_n] \bm{x}.
\end{align}



\subsection{Derivative operator for vectors and matrices\label{app:dev}}

For any vector $\bm{a} \in \mathbb{R}^m$,  
we define a mapping $i$-th derivative of $\bm{a}$ in a given direction $u$ as follows
\begin{align}
   \parvec{\bm{a}}{i}{u}
   := \frac{\partial^{(i)}}{\partial u} \bm{a}.
\end{align}
$u$ can be any direction, such as time or space.
We denote an expression obtained by dividing $\parvec{\bm{a}}{i}{u}$ by $i!$ as follows
\begin{align}
    \parvec{\bm{a}}{i}{u!}
    :=
    \frac{\parvec{\bm{a}}{i}{u}}{i!}.
\end{align}
Furthermore, using $\bm{a}^{(i|u!)}$, we define a mapping $\cmtanvfunc{\bm{a}}{(k|u)}$, which maps $\bm{a}$ to a vector collecting derivatives up to order $k$, as follows
\begin{align}
    \label{cmt_vec_map}
    &\cmtanvfunc{\bm{a}}{(k|u)} \\
    \quad&:=
    \cmtanvec{\bm{a}}{u!}. \notag
\end{align}
Similarly, for a matrix $\bm{A} \in \mathbb{R}^{m \times m}$,  
we define a mapping $\cmtanmfunc{\bm{A}}{(k|u)}$ to a lower block triangular matrix as follows
\begin{align}    
    \label{cmt_mat_map}
    &\cmtanmfunc{\bm{A}}{(k|u)}
    \\
    &=
    \cmtanmat{\bm{A}}{u!}. \nonumber
\end{align}
Here, when the direction of differentiation $|u$ is omitted and the notation is written as $\cmtanvfunc{\bm{a}}{(k)}$ or $\cmtanmfunc{\bm{A}}{(k)}$, it denotes differentiation with respect to the time direction $t$.  
Moreover, when the order of differentiation is omitted and the notation is written as $\cmtanvfunc{\bm{a}}{}$ or $\cmtanmfunc{\bm{A}}{}$, it denotes differentiation up to order $k$.
When a vector is defined as
\begin{align}
\mathfrak{a}_{(k)} := \cmtanvfunc{\bm{a}}{(k)},
\end{align}
the subscript $(k)$ is omitted when it is clear from the context, and we simply write $\mathfrak{a}$.

The first-order derivative of $\cmtanvfunc{\bm{a}}{(k|u)}$ with respect to $u$ can be represented using $\cmtanvfunc{\bm{a}}{(k+1|u)}$, whose derivative orders are shifted by one, together with the coefficient matrix $\natm_{(1:k+1)}$, as
\begin{align}
    \label{dt_diff}
    &\frac{\partial}{\partial u} \cmtanvfunc{\bm{a}}{(k|u)}
    = 
    \begin{bmatrix}
        \zerom_{m(k+1)\times m} & \natm_{(1:k+1)}
    \end{bmatrix}
    \cmtanvfunc{\bm{a}}{(k+1|u)},
    \\
    &\natm_{(1:k+1)}
    :=
    \begin{bmatrix}
        \eyem_m & \bm{O}_m &  \cdots & \bm{O}_m\\
        \bm{O}_m & 2\eyem_m &  \ddots & \vdots\\ 
        \vdots & \ddots & \ddots & \vdots \\
        \bm{O}_m & \cdots & \bm{O}_m & (k+1)\eyem_m
    \end{bmatrix}.
\end{align}

\subsection{Variable explanation}
Variables used in this section are summarized in Table~\ref{tab:variable_table}. 
Six-dimensional spatial quantities, represented using spatial transformation matrices, are denoted in boldface. 
The comprehensive representations, which collect derivatives of a physical quantity from order $0$ to $k$ based on the comprehensive motion transformation matrix, are denoted in Fraktur (German) typeface. 
Additional notations are summarized in Table~\ref{tab:general_variable_table}.

The index or its sets used to describe the tree structure are summarized in Table~\ref{tab:set_childparent}. 
Let $\link_i$ denote the index of the $i$-th link.
Let $\joint_i$ denote the index of the joint connecting link $p(i)$ and link $i$.
Here, $\parent{\link_i}$ denotes the parent index of link $i$, while $\parents{\link_i}$ represents the set of its ancestors (toward the root). 
Similarly, $\child{\link_i}$ denotes the set of children indices of link $i$, and $\children{\link_i}$ represents the set of its descendants (toward the leaves). 
The sets $\parentss{\link_i}$ and $\childrens{\link_i}$ include link $i$ itself.
Let $\sharp_i$ denote either the joint index $J_i$ or the link index $L_i$.


\begin{table}[t]
\centering
\begin{tabular}{c|cc}
Quantity & 6D spatial & Comprehensive \\
\hline
transformation matrix & $\spm$ & $\cmtm$ \\
velocity & $\spv$ & $\cmv$ \\
joint angle & $\jang$ & $\cmjang$ \\
joint angular velocity & $\jvel$ & $\cmjvel$ \\
momentum & $\spmom$ & $\cmmom$ \\
force & $\spforce$ & $\cmforce$ \\
torque & $\torque$ & $\cmtorque$ \\
tangent vector & $\sptanv$ & $\cmtanv$ \\
inertia matrix & $\spinertia$ & $\cminertia$ \\
selection matrix & $\slctm$ & $\cmslctm$ \\
Jacobian matrix & $\jacob$ & $\cmgjacob$ \\
\end{tabular}
\caption{Notation for spatial and comprehensive variables}
\label{tab:variable_table}
\end{table}

\begin{table}[t]
\centering
\begin{tabular}{c|c}
Description & Symbol \\
\hline
zero matrix & $\zerom$ \\
$j$-th unit vector & $\bm{e}_j$ \\
identity matrix & $\eyem$ \\
rotation matrix & $\rot$ \\
position vector & $\pos$ \\
\end{tabular}
\caption{Basic notation}
\label{tab:general_variable_table}
\end{table}

\begin{table}[t]
\centering
\resizebox{\linewidth}{!}{
\begin{tabular}{c|c}
Description & Symbol \\
\hline
index of link $i$ & $\link_i$ \\
index of joint $i$ & $\joint_i$ \\
parent index of link $i$ & $\parent{\link_i}$ \\
ancestor indices of link $i$ & $\parents{\link_i}$ \\
ancestors of $i$ including $i$ & $\parentss{\link_i} := \parents{\link_i} \cup \{\link_i\}$ \\
children indices of link $i$ & $\child{\link_i}$ \\
descendant indices of link $i$ & $\children{\link_i}$ \\
descendants of $i$ including $i$ & $\childrens{\link_i} := \children{\link_i} \cup \{\link_i\}$ \\
the joint index $J_i$ or the link index $L_i$ & $\#_i$
\end{tabular}
}
\caption{Notation for tree-structured index sets}
\label{tab:set_childparent}
\end{table}
\section{Comprehensive Motion Computation\label{sec:cmcomp}}

Figure \ref{fig:quantity_propagation} illustrates the overall structure of the comprehensive motion computation described in this section.
The computation is organized across generalized coordinates, joint space, and link space, covering both kinematic and dynamic quantities.
\begin{figure}
    \centering
    \includegraphics[width=0.9\linewidth]{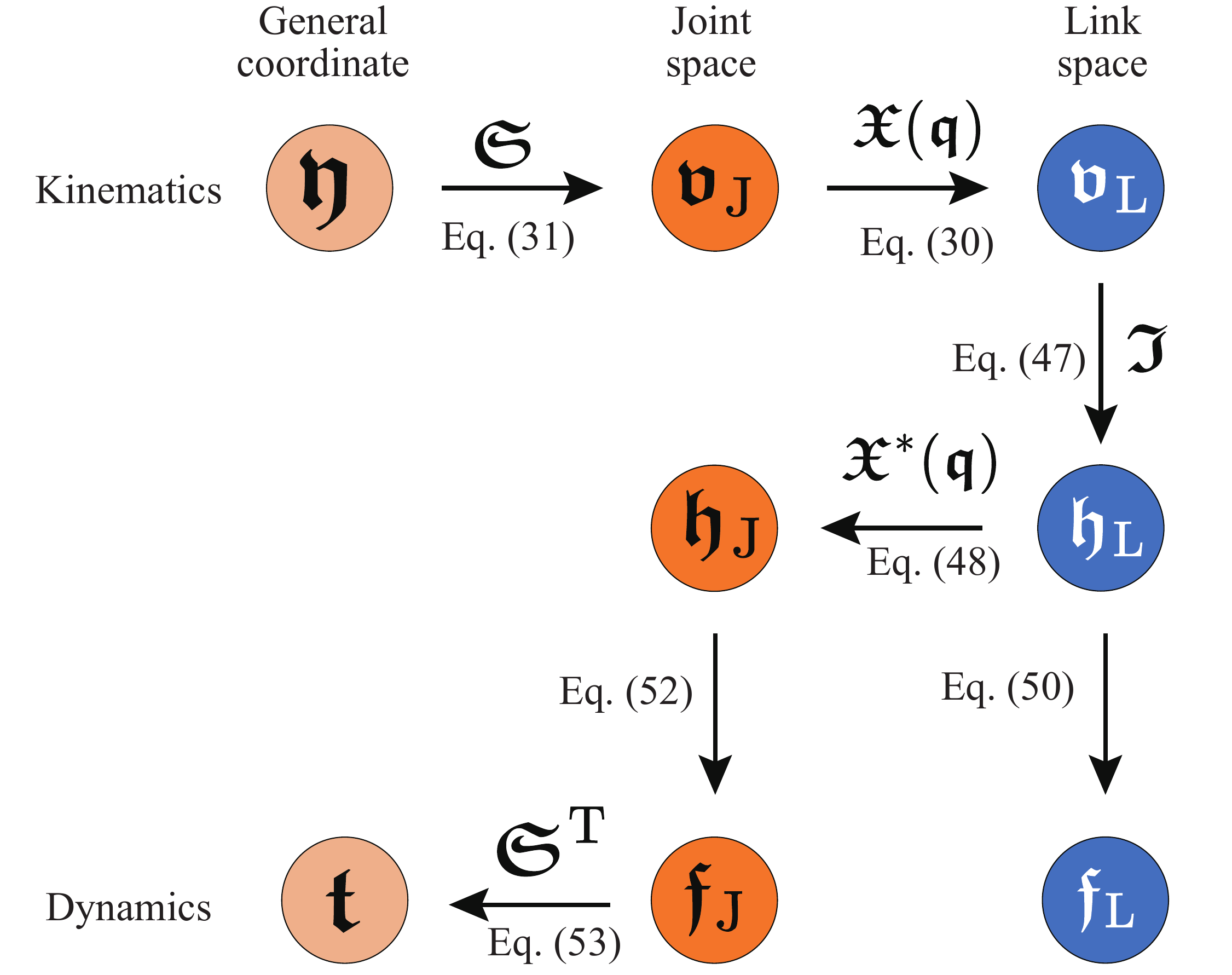}
    \caption{Structure of comprehensive motion computation.
    The figure illustrates the computation flow of kinematic and dynamic quantities in multi-link systems, including generalized coordinates, joint space, and link space, as described in Section~5.}
    \label{fig:quantity_propagation}
\end{figure}

\subsection{Kinematics motion computation}
First, we review the kinematics computation developed in the field of robotics. 
The position and orientation of each link composing a robot can be represented using a spatial transformation matrix defined as follows \cite{featherstone2014rigid}
\begin{align}
    \spm 
    :=
    \spmat
    \in
    SE(3).
\end{align}
where, $\bm{p} \in \mathbb{R}^3$ denotes a position vector, and $\bm{R} \in SO(3)$ denotes a rotation matrix.
Using the spatial transformation matrix, the forward kinematics of the robot can be computed to obtain the position and orientation of each link from the base link to the end-effector, as expressed by the following equation
\begin{align}
    \label{eq:sp_fk}
    \fk{\spm}.
\end{align}
The tangent vector of the spatial transformation matrix with respect to time is defined as follows
\begin{align}
    \label{spv_def}
    \adjsp{\spv} := \spm^{-1} \frac{\partial \spm}{\partial t}.
\end{align}
where $\spv$ is referred to as the spatial velocity \cite{featherstone2014rigid}.
By taking the time derivative of both sides of \eqref{eq:sp_fk}, the following relation is obtained
\begin{align}
    \label{eq:sp_dfk}
    \dfk{\spm}{\spv}
\end{align}
where $\spv_{\joint_i}$ represents the spatial velocity generated by the joint motion. 
It can be computed using the selection matrix $\slctm$, 
and the joint velocity $\jvel$, as expressed below
\begin{align}
    \label{eq:sp_joint_vel}
    \spv_{\joint_i}
    =
    \slctm_{\joint_i} 
    \jvel_{\joint_i}.
\end{align}
where the selection matrix $\slctm_{\joint_i}$ is defined by the joint type, 
\begin{align}
    \label{eq:select}
    \slctm_{\joint_i} :=
    \left\{
    \begin{array}{cll}
        \left [
        \begin{array}{cc}
            \bm{e}_j^{\rm T} & \bm{0}^{\rm T}
        \end{array}
        \right ]^{\rm T}
        &\in \mathbb{R}^{6 \times 1}
        & ({\rm revolute}) \\
        \left [
        \begin{array}{cc}
            \bm{0}^{\rm T} & \bm{e}_j^{\rm T}
        \end{array}
        \right ]^{\rm T}
        &\in \mathbb{R}^{6 \times 1}
        & ({\rm prismatic}) \\
        \left [
        \begin{array}{cc}
            \bm{E} &
            \bm{O}
        \end{array}
        \right ]^{\rm T}
        &\in \mathbb{R}^{6 \times 3}
        & ({\rm spherical}) \\
        \bm{E}
        &\in \mathbb{R}^{6 \times 6}
        & ({\rm floating}). \\
    \end{array}
    \right.
\end{align}
From \eqref{eq:sp_dfk} and \eqref{eq:sp_joint_vel}, the basic Jacobian matrix \cite{khatib1987unified} for link $i$ can be computed as follows
\begin{subequations}
\begin{align}
    \label{sp_basic_jacob}
    \bjacob{\spv}{\jacob}{\jvel}{\spm}{\slctm}.
\end{align}
\end{subequations}
Since $\spv$ cannot be obtained directly as the time derivative of a six-dimensional vector, the matrix $\jacob$ here is referred to as the basic Jacobian rather than a Jacobian matrix.
Note that the basic Jacobian in \cite{khatib1987unified} is defined for world-frame velocity, rather than the local spatial velocity used here.
In the following, most of the matrices that appear are  Jacobian matrices, while some include the basic Jacobian as well. 
For simplicity, however, we refer to all of them collectively as Jacobian matrices.

\subsection{Comprehensive kinematics computation}
When attempting to derive not only \eqref{eq:sp_fk} and \eqref{eq:sp_dfk} but also higher-order differential forward-kinematics equations, the number of terms increases rapidly with the order of differentiation.
To address this issue, the comprehensive motion transformation matrix, which enables the description of motion transformations up to arbitrary orders of time differentiation, has been proposed \cite{ayusawa2018comprehensive,samy2021generalized}.

For the spatial transformation matrix $\spm$, the comprehensive motion transformation matrix (CMTM) $\cmtm$ is defined as follows
\begin{align}
    \cmtm &:=\cmtmat{\cmtpm}
    = \cmtanmfunc{\spm}{(\dindx|t)}, \\
    &
    \label{cmtm_recurrence}
    \begin{cases}
        \cmtpm_0 = \spm \\
        \cmtpm_{\ell+1} 
        = 
        \frac{1}{\ell+1} 
        \sum_{m=0}^\ell \cmtpm_{\ell-m} 
        \adjsp{\spv^{(m|t!)}} 
    \end{cases}
\end{align}
Since the CMTM can be represented by the mapping in \eqref{cmt_mat_map}, its elements can be expressed in terms of the time derivatives of the spatial transformation matrix $\spm$.
However, as shown in \eqref{cmtm_recurrence}, it can also be computed recursively.


Moreover, as with other transformation matrices, CMTM also satisfies the chain rule
\begin{align}
    \label{cmtm_fk}
    \fk{\cmtm}.
\end{align}
As proposed in \cite{samy2021generalized}, forward kinematic computations can be performed not only for velocity but also for higher-order time derivatives.
In addition, the comprehensive spatial velocity $\cmv$, which aggregates the spatial velocity and its derivatives up to order $\dindx$, is defined as follows:
\begin{align}
    \label{cmv_def}
    \cmvec{\cmv}{(\dindx)}
    :=
    \begin{bmatrix}
     \frac{1}{0!}\frac{\partial^{(0)}}{\partial t} \spv^{\text{T}}
     & \cdots
     & \frac{1}{\dindx!}\frac{\partial^{(\dindx)}}{\partial t} \spv^{\text{T}}
    \end{bmatrix}^{\text{T}}
    =
    \cmtanvfunc{\spv}{(\dindx|t)}.
\end{align}
This representation can be naturally extended to other physical quantities.  
In this paper, the stacked representation of a physical quantity and its higher-order time derivatives is referred to as its \textit{comprehensive representation}, and the term \textit{comprehensive} is used in this sense throughout.


In the following, when the notation $\cmv$ is used without an explicit order index, it refers to $\cmvec{\cmv}{(\dindx)}$.
Moreover, as in \eqref{spv_def}, $\cmvec{\cmv}{(\dindx)}$ can be calculated as
\begin{align}
    \label{cmv_def_mat}
    \liealged{\adjcm{\cmv}}{\cmtm}{t}.
\end{align}
Therefore, by multiplying both sides of \eqref{cmv_def_mat} by $\cmtm$,
$\frac{\partial \cmtm}{\partial t}$ can be obtained as follows:
\begin{align}
    \label{derivative_cmtm_by_t}
    \frac{\partial \cmtm}{\partial t}
    =
    \cmtm
    \adjcm{\cmv}.
\end{align}
For the comprehensive spatial velocity $\cmv$, an expression analogous to \eqref{eq:sp_dfk} for spatial velocity $\spv$ holds, as given below
\begin{align}
    \label{cm_dkine}
    \dfk{\cmtm}{\cmv}.
\end{align}
where, $\cmv_{\joint_i}$ can be computed using comprehensive joint angular velocity $\cmvec{\cmjvel}{(\dindx)} := \cmtanvfunc{\jvel}{(\dindx)}$ as follows
\begin{align}
    \label{joint_spatial_cm_trans}
    \cmv_{\joint_i}
    &=
    \cmslctm_{\joint_i}
    \cmjvel_{\joint_i}
    \\
    \cmslctm_{\joint_i} 
    &:= \text{diag}(\begin{bmatrix}\bm{S}_{\joint_i} & \cdots &\bm{S}_{\joint_i}\end{bmatrix}).
\end{align}
As in \eqref{sp_basic_jacob}, the computation of the basic Jacobian matrix can be applied in the same manner
\begin{subequations}
\begin{align}
    \label{cm_basic_jacob}
    \bjacob{\cmv}{{\cmjacobdt}}{\cmjvel}{\cmtm}{\cmslctm}.
\end{align}
\end{subequations}

\subsection{Dynamics computation}
We next describe the dynamic computation.  
For a given link $i$, the spatial momentum $\spmom_{\link_i}$ is given by
\begin{align}
    \label{def_moment}
    \spmom_{\link_i} = \spinertia_{\link_i} \spv_{\link_i}
\end{align}
where $\spinertia_{\link_i}$ is spatial inertia.
We define the transformation matrix for spatial forces and moments as
\begin{align}
    \bm{A}^{\ast}
    :=
    \begin{bmatrix}
        \bm{R} & [\bm{p}\times_3]\bm{R} \\
        \bm{O} & \bm{R}
    \end{bmatrix}
    =
    \bm{A}^{-\text{T}}.
\end{align}

The spatial momentum of link $i$ is further propagated to its parent link through the following relation:
\begin{align}
    \label{moment_dynamics}
    \id{\spm^{\ast}}{\spmom}.
\end{align}

The spatial momentum $\spmom_{\link_i}$ is expressed in the local coordinate frame.  
Other representations include the momentum about the local-frame origin expressed in the world frame, and the momentum about the world-frame origin expressed in the world frame.  
Let $\wmom_{\link_i}$ denote the momentum about the world-frame origin expressed in the world frame.  
The relationship between $\spmom_{\link_i}$ and $\wmom_{\link_i}$ is given by
\begin{align}
    \label{world_local_moment}
    \wmom_{\link_i} = {}^{\text{w}}\bm{A}_{\link_i}^{\ast} \spmom_{\link_i}.
\end{align}
The spatial force in the world frame is given by
\begin{align}
    \label{world_force_moment}
    \wforce_{\link_i} = \frac{\partial}{\partial t}\wmom_{\link_i}.
\end{align}
Since the same transformation holds for forces,
\begin{align}
    \label{world_local_force}
    \wforce_{\link_i} = {}^{\text{w}}\bm{A}_{\link_i}^{\ast} \spforce_{\link_i}.
\end{align}
Left-multiplying \eqref{world_local_force} by ${}^{\text{w}}\bm{A}_{\link_i}^{\ast^{-1}}$, and using \eqref{world_force_moment} and \eqref{world_local_moment}, we obtain
\begin{align}
    \label{link_force_moment}
    \spforce_{\link_i}
    &=
    {}^{\text{w}}\bm{A}_{\link_i}^{\ast^{-1}} \wforce_{\link_i}
    =
    {}^{\text{w}}\bm{A}_{\link_i}^{\ast^{-1}}
    \frac{\partial}{\partial t}
    \Bigl(
        {}^{\text{w}}\bm{A}_{\link_i}^{\ast} \spmom_{\link_i}
    \Bigr)
    \notag\\
    &=
    \frac{\partial \spmom_{\link_i}}{\partial t} 
    +
    \cadjsp{\spv_{\link_i}}
    \spmom_{\link_i}.
\end{align}

For the transmitted force from link $\parent{i}$ to link $i$, the transformation between the world-frame force $\wforce_{\joint_i}$ 
and the local-frame force $\spforce_{\joint_i}$ is given by
\begin{align}
    \wforce_{\joint_i} = {}^{\text{w}}\bm{A}_{\link_i}^{\ast} \spforce_{\joint_i},
\end{align}
$\wforce_{\joint_i}$ satisfies a relation similar to \eqref{world_force_moment}:
\begin{align}
    \wforce_{\joint_i} = \frac{\partial}{\partial t}\wmom_{\joint_i}.
\end{align}

The relationship between $\spforce_{\joint_i}$ and $\spmom_{\joint_i}$ is derived analogously as
\begin{align}
    \label{joint_force_moment}
    \spforce_{\joint_i}
    &=
    {}^{\text{w}}\bm{A}_{\link_i}^{\ast^{-1}} \wforce_{\joint_i}
    =
    {}^{\text{w}}\bm{A}_{\link_i}^{\ast^{-1}}
    \frac{\partial}{\partial t}
    \Bigl(
        {}^{\text{w}}\bm{A}_{\link_i}^{\ast} \spmom_{\joint_i}
    \Bigr)
    \notag\\
    &=
    \frac{\partial \spmom_{\joint_i}}{\partial t} 
    +
    \cadjsp{\spv_{\link_i}}
    \spmom_{\joint_i}.
\end{align}
Note that, also in this case, the second term involves $\cadjsp{\spv_{\link_i}}$, as in \eqref{link_force_moment}, which was derived in \cite{samy2021generalized}. This follows from the time derivative of the transformation matrix ${}^{\text{w}}\bm{A}_{\link_i}^{\ast}$.

Finally, the joint torque $\torque_{\joint_i}$ is obtained as
\begin{align}
    \torque_{\joint_i} = \slctm_{\joint_i}^{\text{T}} \spforce_{\joint_i}.
\end{align}
\subsection{Comprehensive dynamics computation}
The comprehensive link momentum and the corresponding block-diagonal inertia matrix are given as
\begin{align}
    \cmmom_{(\dindx)}
    &:=
    \cmtanvfunc{\spmom}{(\dindx)},
    \\
    \cminertia
    &:=
    \mathrm{diag}([\spinertia \cdots \spinertia]).
\end{align}
In this case, analogous to \eqref{def_moment}, the following expression can be obtained
\begin{align}
    \label{def_cmtm_moment}
    \cmmom_{\link_i} &= \cminertia_{\link_i} \cmv_{\link_i}.
\end{align}
Furthermore, in the same manner as \eqref{moment_dynamics}, the following expression can be obtained
\begin{align}
    \label{cm_moment_dynamics}
    \id{\cmtm^{\ast}}{\cmmom}.
\end{align}
Here, $\cmtm^{\ast}$ is the comprehensive motion transformation matrix corresponding to $\spm^{\ast}$, and is defined as follows
\begin{align}
    \cmtm^{\ast} &:=\cmtmat{\cmtpm^{\ast}}
    = \cmtanmfunc{\spm^{\ast}}{(\dindx|t)}, \\
    &
    \begin{cases}
        \cmtpm_0^{\ast} = \spm^{\ast}, \\
        \cmtpm_{l+1}^{\ast} 
        = 
        \frac{1}{l+1} 
        \sum_{m=0}^l \cmtpm_{l-m}^{\ast} 
        \cadjsp{\spv^{(m|t!)}}.
    \end{cases}
    \notag
\end{align}
The comprehensive link force $\cmvec{{\cmforce_{\link_i}}}{(\dindx)} := \cmtanvfunc{\spforce_{\link_i}}{(\dindx)}$ can be computed in the same manner as \eqref{link_force_moment} as follows
\begin{align}
    \label{cm_force_momentum}
    \cmvec{{\cmforce_{\link_i}}}{(\dindx)}
    &= 
    {}^{\text{w}}{\cmtm_{\link_i}^{\ast}}^{-1}
    \frac{\partial}{\partial t}
    \Bigl(
        {}^{\text{w}}\cmtm_{\link_i}^{\ast} \cmvec{{\cmmom_{\link_i}}}{(\dindx)}
    \Bigr)
    \notag
    \\
    &=
    \frac{\partial}{\partial t}\cmvec{{\cmmom_{\link_i}}}{(\dindx)}
    + 
    \cadjcm{{\cmvec{{\cmv_{\link_i}}}{(\dindx)}}}
    \cmvec{{\cmmom_{\link_i}}}{(\dindx)}
\end{align}
where $\cadjcm{}$ is defined as an operator given by the following expression
\begin{align}
    \cadjcm{\cmvec{{\cmv_{\link_i}}}{(\dindx)}}
    =
    \cadjcm{\bigl(\cmtanvfunc{\spv_{\link_i}}{(\dindx|t)}\bigr)}
    =
    \cmtanmfunc{\cadjsp{\spv_{\link_i}}}{(\dindx|t)}.
\end{align}
The comprehensive force from link $\parent{i}$ to link $i$, $\cmvec{{\cmforce_{{\joint_i}}}}{(\dindx)} := \cmtanvfunc{\spforce_{\joint_i}}{(\dindx)}$ can be computed in the same manner
\begin{align}
    \label{cm_force_momentum_joint}
    \cmvec{{\cmforce_{\joint_i}}}{(\dindx)}
    &= 
    {}^{\text{w}}{\cmtm_{\link_i}^{\ast}}^{-1}
    \frac{\partial}{\partial t}
    \Bigl(
        {}^{\text{w}}\cmtm_{\link_i}^{\ast} \cmvec{{\cmmom_{\joint_i}}}{(\dindx)}
    \Bigr)
    \notag\\
    &=
    \frac{\partial}{\partial t}
    \cmvec{{\cmmom_{\joint_i}}}{(\dindx)}
    + 
    \cadjcm{\cmvec{{\cmv_{\link_i}}}{(\dindx)}}
    \cmvec{{\cmmom_{\joint_i}}}{(\dindx)}.
\end{align}

The comprehensive joint torque $\cmvec{\cmtorque}{(\dindx)} := \cmtanvfunc{\torque}{(\dindx)}$ satisfies
\begin{align}
    \label{cm_torque_force}
    \cmtorque_{\joint_i}
    =
    \cmslctm_{\joint_i}^{\trans}
    \cmforce_{\joint_i}.
\end{align}
As described in Appendix~\ref{app:wbcmtm}, it is also possible to perform whole-body motion computation in a unified manner by stacking the link- and joint-level variables introduced in the comprehensive motion formulation into a single vector.
\section{Elemental Relations for High-Order Jacobian Computation \label{sec:elem_grad}}
In this section, we present a set of elemental relations required for computing higher-order time derivatives in optimization.
These relations describe fundamental transformations and identities that serve as building blocks for the structured Jacobian computation developed in the subsequent sections.

\subsection{Recursive structure and closed-form expansion on multi-link tree graphs \label{sec:grad_rec}}
As in the computation of the basic Jacobian matrix described in Eqs. \eqref{sp_basic_jacob} and \eqref{cm_basic_jacob}, the computation of Jacobian matrices in multi-link systems reduces to solving recursive linear equations defined over a tree-structured graph
\begin{subequations}
\label{eq:structure}
\begin{align}
\bm{x}_i &= \bm{y}_i + \sum_{j \in \lambda(i)} {}^{i}\bm{X}_j \bm{x}_j, \label{eq:structure_main} \\
{}^{i}\bm{X}_j &:= \bm{X}_i^{-1} \bm{X}_j, \label{eq:relative_def} \\
\bm{x}, \bm{y} &\in \mathbb{R}^n, \\
\label{eq:lie_def}
\bm{X} &\in G \subset \mathbb{R}^{n\times n}, \\
\label{eq:set_def}
{i} &\notin \lambda^k(i), \quad \forall i,\ \forall k \ge 1,
\end{align}
\end{subequations}
where \eqref{eq:relative_def} defines the relative transformation from coordinate $i$ to coordinate $j$, $G$ is a matrix Lie group, and $\lambda^k(i)$ denotes the $k$-times application of $\lambda$ to $i$.
The condition \eqref{eq:set_def} implies that the dependency relation defined through $\lambda(i)$ forms a tree-structured (acyclic) graph.
Typical choices of $\lambda(i)$  include parent and child index sets, denoted for example by $\parent{i}$ and $\child{i}$.

Under the above assumptions, the recursive equation \eqref{eq:structure_main} admits the following closed-form expansion due to the acyclic structure of $\lambda(i)$:
\begin{subequations}
\label{recursive_close_form}
\begin{align}
    \bm{x}_i &= \sum_{j \in \hat{\Lambda}(i)} {}^i\bm{X}_j \bm{y}_j,
    \\
    &\Lambda(i) = \bigcup_{k \ge 1} \lambda^k(i),\\
    &\hat{\Lambda}(i) = \{ i \} \cup \Lambda(i).
\end{align}
\end{subequations}
Similarly, in Eqs. \eqref{moment_dynamics} and \eqref{cm_moment_dynamics}, since $\spm^{\ast}$ and $\cmtm^{\ast}$ satisfy the conditions in \eqref{eq:relative_def} and \eqref{eq:lie_def}, and $\children{i}$ satisfies the same structural conditions as $\lambda(i)$ in \eqref{eq:set_def}, the corresponding quantities can be computed in the same manner.

\subsection{Variant of comprehensive motion transformation matrix}
The tangent vector associated with the variation of the comprehensive motion transformation matrix is defined as follows
 \begin{align}
    \label{cmtm_tan_vec}
	  &
    \liealge{\adjcm{\cmvec{\cmtanv}{(\dindx)}}}{\cmtm}
    =
	\cmtmat{\cmptanv},
    \\
    &\begin{cases}
    \cmptanv_0  
    = \adjsp{\delta \bm{a}} := \bm{A}^{-1} \delta \bm{A} \\
    \cmptanv_{{l+1}} 
    = \frac{1}{l+1}
    \left(
	\delta \parvec{\spv}{l}{t!} 
	+ \sum_{m=0}^{l} 
        \adjsp{\cmptanv_{m}}
        \parvec{\spv}{l-m}{t!}
    \right).
    \end{cases}
    \notag
\end{align}
Similarly to \eqref{cm_dkine}, the following equation holds for $\cmtanv$
\begin{align}
    \label{tan_dfk}
    \dfk{\cmtm}{\cmtanv}.
\end{align}\textbf{}
By multiplying $\cmtm$ on both sides of \eqref{cmtm_tan_vec},  we can obtain a variant of $\cmtm$ similar to \eqref{derivative_cmtm_by_t} as follows:
\begin{align}
    \label{cmtm_var}
    \delta \cmtm = \cmtm \adjcm{\cmtanv}.
\end{align}
We obtain a variant of $\cmtm$ acting on an arbitrary vector $\bm{x}$ as follows:
\begin{align}
    \label{cmtm_var_arb_vec}
    \delta \cmtm \bm{x} 
    = \cmtm \adjcm{\cmtanv} \bm{x} 
    = \cmtm [\bm{x} {\widehat{\times}}_{6\cdot\dindx}] \cmtanv.
\end{align}

\subsection{Relationship between different type of derivative}
In motion optimization, the optimization variables consist of the joint velocities and their higher-order time derivatives.  
Accordingly, we establish the correspondence between the tangent vector of the comprehensive motion transformation matrix in \eqref{cmtm_tan_vec} and the variations of the spatial velocity and its higher-order time derivatives.  

Since time differentiation and variation are interchangeable, the following relation holds:
\begin{align}
    \label{tan_time_tan_rel}
    \frac{\partial }{\partial t} \delta \cmtm = \delta \frac{\partial \cmtm}{\partial t}.
\end{align}
Using \eqref{cmtm_var}, the left-hand side of \eqref{tan_time_tan_rel} is given by
\begin{align}
    \label{tan_time_tan_rel_left}
    \frac{\partial }{\partial t} \delta \cmtm 
    &= 
    \frac{\partial }{\partial t} (\cmtm \adjcm{\cmtanv})
    =
    \frac{\partial \cmtm}{\partial t}\adjcm{\cmtanv} + \cmtm \frac{\partial \adjcm{\cmtanv}}{\partial t}
    \notag \\
    &=
    \cmtm \adjcm{\cmv} \adjcm{\cmtanv} + \cmtm \adjcm{\frac{\partial \cmtanv}{\partial t}}.
\end{align}
Similarly, using \eqref{derivative_cmtm_by_t}, the right-hand side is
\begin{align}
    \label{tan_time_tan_rel_right}
    \delta \frac{\partial }{\partial t} \cmtm
    &=
    \delta (\cmtm \adjcm{\cmv})
    =
    \delta \cmtm \adjcm{\cmv} + \cmtm \delta\adjcm{\cmv}
    \notag \\
    &=
    \cmtm \adjcm{\cmtanv}\adjcm{\cmv} + \cmtm \adjcm{\delta\cmv}.
\end{align}
Substituting \eqref{tan_time_tan_rel_left} and \eqref{tan_time_tan_rel_right} into \eqref{tan_time_tan_rel}, left-multiplying by $\cmtm^{-1}$, and using the identity 
$\adjcm{\bm{a}}\adjcm{\bm{b}} - \adjcm{\bm{b}}\adjcm{\bm{a}} = \adjcm{(\adjcm{\bm{a}}\bm{b})}$,
we obtain
\begin{align}
    \label{lie_alg_time_var}
    \delta \cmvec{\cmv}{(\dindx)} = \adjcm{\cmvec{\cmv}{(\dindx)}}\cmvec{\cmtanv}{(\dindx)} + \frac{\partial}{\partial t}
    \cmvec{\cmtanv}{(\dindx)}
    .
\end{align}
Using \eqref{dt_diff}, we obtain
\begin{align}
    \label{dt_cmtanv}
    \frac{\partial \cmvec{\cmtanv}{(k)}}{\partial t} =     
    \begin{bmatrix}
        \zerom_{m(k+1)\times m} & \natm_{(1:k+1)}
    \end{bmatrix}
    \cmvec{\cmtanv}{(k+1)}.
\end{align}
By using \eqref{dt_cmtanv}, 
Eq.~\eqref{lie_alg_time_var} can be regarded as the following linear transformation
\begin{align}
    \label{cm_tan_map}
    &\delta\cmvec{\cmv}{(\dindx)}
    =
    \bm{\mathfrak{U}}(\adjcm{\cmvec{\cmv}{(\dindx)}})
    \cmtanv_{(\dindx+1)},
    \\
    &\bm{\mathfrak{U}}(\adjcm{\cmvec{\cmv}{(\dindx)}})
    := \notag
    \\
    &\quad\left(
    \begin{bmatrix}
        \adjcm{\cmvec{\cmv}{(\dindx)}} & \zerom_{6(\dindx+1)\times 6}
    \end{bmatrix}
    +
    \begin{bmatrix}
        \zerom_{6(\dindx+1)\times 6} & \natm_{(1:\dindx+1)}
    \end{bmatrix}
    \right).
    \notag
\end{align}
We newly define the vector $\cmptanv$ by concatenating $\delta \bm{a}$ and $\delta \cmvec{\cmv}{(\dindx)}$ as
\begin{align}
    \cmvec{\cmptanv}{(\dindx)}
    :=
    \begin{bmatrix}
        \delta \bm{a} \\
        \delta \cmvec{\cmv}{(\dindx)}
    \end{bmatrix}.
\end{align}
By adding the identity $\delta \bm{a} = \delta \bm{a}$ to \eqref{cm_tan_map},  
the following expression is obtained
\begin{align}
    \label{cm_tan_map_2}
    &\quad\quad\quad\quad\quad\quad\quad\quad
    \cmvec{\cmptanv}{(\dindx)}
    =
    \cmvec{\cmtanmap}{(\dindx+1)}
    \cmvec{\cmtanv}{(\dindx+1)},
    \\
    &\cmvec{\cmtanmap}{(\dindx+1)}
    :=
    \begin{bmatrix}
        \begin{bmatrix}
        \eyem_6 & \zerom_{6\times 6(\dindx+1)}
        \end{bmatrix} \\
        \bm{\mathfrak{U}}(\adjcm{\cmvec{\cmv}{(\dindx)}})
    \end{bmatrix}
    \\
    &=
    \begin{bmatrix}
        \zerom_{6\times6(\dindx+1)} & \zerom_{6} \\
        \adjcm{\cmv_{(\dindx)}} & \zerom_{6(\dindx+1)\times 6}
    \end{bmatrix}
    +
    \begin{bmatrix}
        \eyem_6 & \zerom_{6\times 6(\dindx+1)} \\
        \zerom_{6(\dindx+1) \times 6} & \natm_{(1:\dindx+1)}
    \end{bmatrix}.
    \notag
\end{align}
The explicit forms of $\cmvec{\cmtanmap}{(\dindx+1)}$ and its inverse are given in Appendix \ref{apdx_cmtm_tan_map}.

\subsection{Coordinate transformation of force and moment variations}
As discussed in Section~\ref{sec:grad_rec}, the recursive computation of the momentum in \eqref{cm_moment_dynamics} satisfies the conditions in \eqref{eq:structure_main}.  
However, when considering its variation, the variation of $\cmtm$ is involved, and the resulting expression no longer satisfies the conditions in \eqref{eq:structure_main}.

To address this issue, we reformulate the recursion using the comprehensive momentum expressed in the world frame, 
\begin{align}
    \label{eq:w_cm_mom}
    \wcmmom_{\#_i} := {}^{\world}\cmtm_{\link_i}^{\ast} \cmmom_{\#_i} (\# = \link \ \text{or} \ \joint).
\end{align}
In this representation, \eqref{cm_moment_dynamics} can be written as
\begin{align}
    \label{wcm_moment_dynamics}
    \wid{\wcmmom}.
\end{align}
Its variation is then given by
\begin{align}
    \label{wcm_moment_dynamics_var}
    \wid{\delta\wcmmom}.
\end{align}
In this form, $\wcmmom_{\joint_i}$ can be interpreted as being premultiplied by the identity matrix, resulting in a linear propagation structure.
A similar linear relationship is also employed in \cite{jain1993linearization}.
This simplification arises because, in the world frame, the propagation of momentum between links is expressed as a direct summation, eliminating the need for relative coordinate transformations.
Since the identity matrix satisfies the conditions in \eqref{eq:relative_def} and \eqref{eq:lie_def}, the same recursive formulation applies without modification.

From \eqref{cm_force_momentum} and \eqref{cm_force_momentum_joint}, it can be observed that the subscripts of the comprehensive spatial velocity appearing before the moment correspond not to relative velocities, but to velocities of each frame with respect to the world coordinate system.
This is because, in the transformation from moment to force, both the joint space and the link space are first mapped into the world coordinate system \eqref{eq:w_cm_mom}, and the resulting form arises from taking the time derivative in that representation.

The transformation from moment to force can be written in a unified manner for both joint space and link space.
Using \eqref{dt_diff}, similarly to \eqref{cm_tan_map}, \eqref{cm_force_momentum} can be expressed as the following matrix mapping from moment to force
\begin{align}
    \label{link_mom_to_force}
    &\cmvec{{\cmforce_{\link_i}}}{(\dindx)}
    =
    \bm{\mathfrak{U}}(\cadjcm{\cmvec{{\cmv_{\link_i}}}{(\dindx)}{}})
    \cmvec{{\cmmom_{\link_i}}}{(\dindx+1)}.
\end{align}
The comprehensive force from link $\parent{i}$ to link $i$, $\cmvec{{\cmforce_{\joint_i}}}{(\dindx)}$, can be computed in the same manner
\begin{align}
    \label{joint_mom_to_force}
    \cmvec{{\cmforce_{\joint_i}}}{(\dindx)}
    =
    \bm{\mathfrak{U}}(\cadjcm{\cmvec{{\cmv_{\link_i}}}{(\dindx)}{}})
    \cmvec{{\cmmom_{\joint_i}}}{(\dindx+1)}.
\end{align}

\section{Structured Jacobian Construction for High-Order Time Derivative Systems\label{sec:stru_jacob}}
\begin{figure*}[tbp]
    \centering    
    \includegraphics[width=0.9\linewidth]{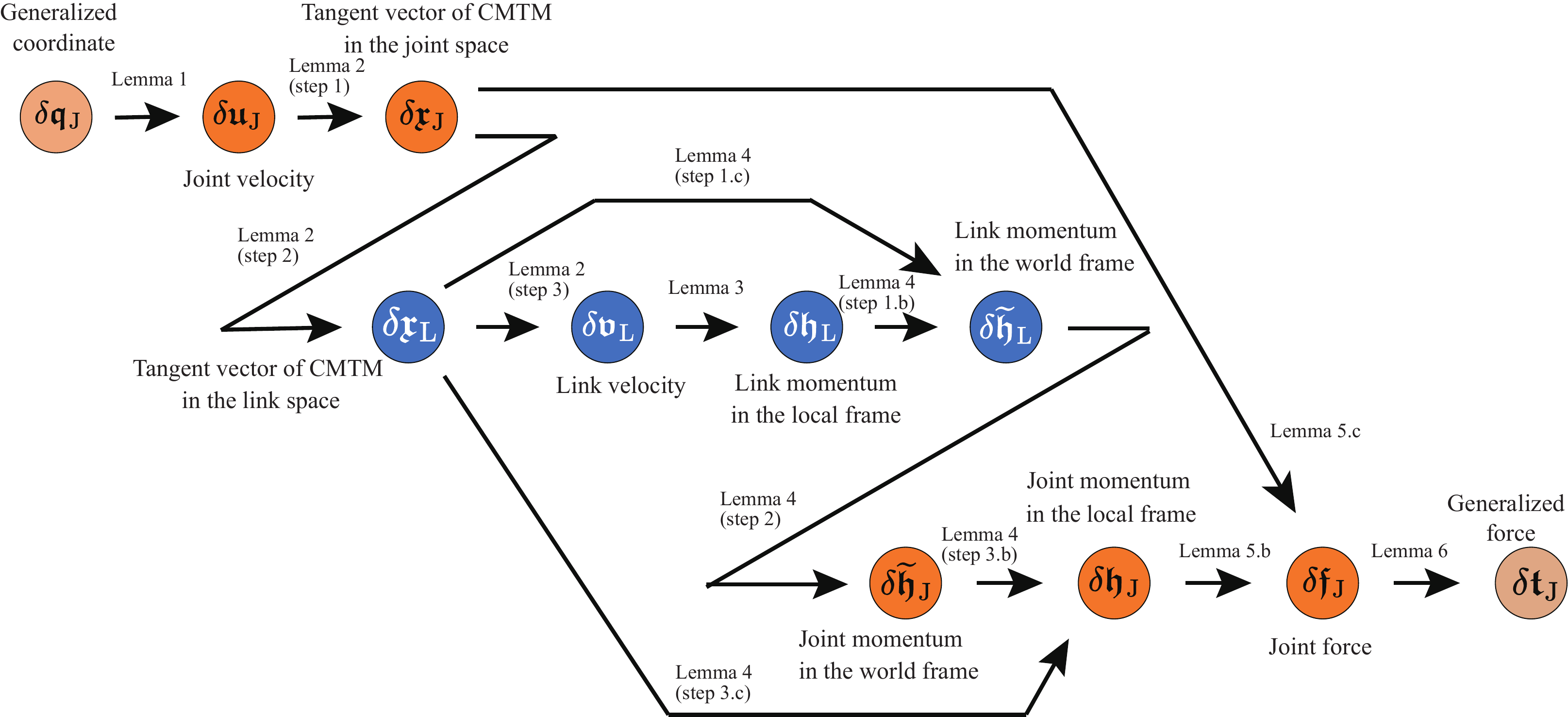}
    \caption{Schematic illustration of the Jacobian computation as a chain of transformations.
            Each node represents a physical quantity, and each edge denotes a linear mapping between variations.The overall Jacobian is obtained by composing these mappings along the chain.}
    \label{fig:gradinet_chain}
\end{figure*}

\begin{table*}
    \centering
    \caption{Summary of Jacobians and their associated physical quantities. Each Jacobian maps variations of generalized coordinates to variations of the corresponding quantity in the structured propagation framework.}
    \begin{tabular}{l c c c ll}
        \toprule
        Quantity & Quantity symbol $\delta \bm{\mathfrak{z}}$ & Jacobian $\cmgjacobarrow{\cmjang}{\bm{\mathfrak{z}}}$ & Eq. & Derived from & \\
        \midrule
        Joint velocity & $\cmptanv_{\joint_i}$ & $\cmptanjacobj{i}$ & \eqref{cm_tan_joint_jacob} & 
        Lemma~\ref{lem:j_vel} & 
        \eqref{joint_spatial_cmtan_trans} \\
        
        Joint tangent & $\cmtanv_{\joint_i}$ & $\cmtanjacobj{i}$ & \eqref{cm_tan_joint_jacob} & 
        Lemma~\ref{lem:cm_vel} (Step 1) & 
        \eqref{var_cmv_1}\\
        
        Link tangent & $\cmtanv_{\link_i}$ & $\cmtanjacob{i}{}$ & \eqref{var_tan_cmtm_link_jacob} & 
        Lemma~\ref{lem:cm_vel} (Step 2) &
        \eqref{var_cmv_2}\\
        
        Link velocity & $\delta \cmv_{\link_i}$ & $\cmjacob{i}$ & \eqref{cm_vel_link_jacob} & 
        Lemma~\ref{lem:cm_vel} (Step 3) &
        \eqref{var_cmv_3}\\
        
        Link momentum & $\delta \cmmom_{\link_i}$ & $\cmlmomjacob{i}$ & \eqref{link_mom_jacob} & 
        Lemma~\ref{lem:lmom} &
        \eqref{delta_cmtm_moment} \\
        Link momentum (world) & $\delta \wcmmom_{\link_i}$ & $\wcmlmomjacob{i}$ & \eqref{link_world_mom_jacob} & 
        Lemma~\ref{lem:mom} (Step 1) &
        \eqref{lem_var_wmom}\\
        
        Joint momentum (world) & $\delta {\wcmmom}_{\joint_i}$ & $\wcmjmomjacob{i}$ & \eqref{joint_world_mom_jacob} & 
        Lemma~\ref{lem:mom} (Step 2) &  
        \eqref{var_link_world_mom_local_mom} \\
        
        Joint momentum & $\delta \cmmom_{\joint_i}$ & $\cmjmomjacob{i}$ & \eqref{joint_mom_jacob} & 
        Lemma~\ref{lem:mom} (Step 3) &
        \eqref{j_cmt_mom_var} \\
        
        Link force & $\delta \cmforce_{\link_i}$ & $\cmlforcejacob{i}$ & \eqref{cm_link_force_jacob} & 
        Lemma~\ref{lem:mom_force} &
        \eqref{var_cmmom_cmforce}\\
        
        Joint force & $\delta \cmforce_{\joint_i}$ & $\cmjforcejacob{i}$ & \eqref{cm_joint_force_jacob} & 
        Lemma~\ref{lem:mom_force} &
        \eqref{var_cmmom_cmforce} \\
        
        Joint torque & $\delta \cmtorque_{\joint_i}$ & $\cmtorquejacob{i}$ & \eqref{torque_jacob} & 
        Lemma~\ref{lem:torque} &
        \eqref{var_cm_torque} \\
        \bottomrule
    \end{tabular}
    \label{tab:jacobian_summary}
\end{table*}

In this section, we construct Jacobians of physical quantities with respect to generalized coordinates and their higher-order derivatives.
Building upon the equations introduced in Section~\ref{sec:cmcomp}, whose variations are considered in this section, and the elemental relations presented in Section~\ref{sec:elem_grad}, the Jacobian computation is formulated as a composition of linear mappings aligned with the computational structure.

Figure~\ref{fig:quantity_propagation} summarizes the forward computation of physical quantities.
Figure~\ref{fig:gradinet_chain} illustrates the corresponding propagation of variations.
The latter is derived from the variation of the former, where each operation is replaced by its associated linear mapping.
This correspondence enables a unified and structured construction of Jacobians.

\subsection{Structured Jacobian propagation}
As illustrated in Fig.~\ref{fig:gradinet_chain}, the propagation of variations across physical quantities forms a chain of transformations.
Each transformation defines a linear mapping, and the overall Jacobian is obtained by composing these mappings along the chain.

Specifically, the variation is propagated through the following stages, as shown in Fig.~\ref{fig:quantity_propagation}.
Each stage defines a linear mapping between variations of physical quantities, represented by its Jacobian.

We first consider the propagation to the comprehensive link spatial velocity.
The variation is given by
\begin{equation}
\delta \cmv_{\link_i} = \cmjacob{i} \delta \cmvec{\cmjang}{(k)},
\end{equation}
where $\cmjacob{i}$ denotes the corresponding Jacobian.
Here, $\cmvec{{\cmjang_{\joint_i}}}{(k)}$ is defined by the concatenation of $ \jang_{\joint_i}$ and $ \cmjvel_{\joint_i}$ as
\begin{align}
    \cmvec{{\cmjang_{\joint_i}}}{(k)}
    :=
    \begin{bmatrix}
        \jang_{\joint_i} \\
        \cmvec{{\cmjvel_{\joint_i}}}{(k)}
    \end{bmatrix}
\end{align}
By collecting all joint components, $\cmjang_\joint$ is defined as follows:
\begin{align}
    \cmjang_\joint
    &=
    \begin{bmatrix}
        \cmjang_{\joint_0}^\trans & 
        \cdots & \cmjang_{\joint_n}^\trans
    \end{bmatrix}^\trans.
\end{align}

The same structure applies to all other physical quantities.
Accordingly, their variations can be expressed in a unified form as
\begin{equation}
\delta \bm{\mathfrak{z}} = \cmgjacobarrow{\cmjang_\joint}{\bm{\mathfrak{z}}} \delta \cmvec{{\cmjang_\joint}}{(k)},
\end{equation}
where $\bm{\mathfrak{z}}$ represents spatial velocity, momentum, force, or torque, and $\cmgjacobarrow{\cmjang_\joint}{\bm{\mathfrak{z}}}$ denotes the Jacobian associated with $\bm{\mathfrak{z}}$.

A summary of the Jacobians and their associated physical quantities is provided in Table~\ref{tab:jacobian_summary}. 
Each row corresponds to a specific quantity and its Jacobian, explicitly showing the mapping from variations of generalized coordinates to variations of the quantity.
To construct these Jacobians in a structured manner, we first introduce a set of fundamental Jacobian mappings as lemmas. 
Each lemma describes a linear mapping between variations of intermediate physical quantities, which serve as building blocks for the overall Jacobian construction.
The lemmas presented in this section are derived by integrating the variation of the comprehensive motion computation introduced in Section~\ref{sec:cmcomp} (Fig.~\ref{fig:quantity_propagation}) with the elemental mappings defined in Section~\ref{sec:elem_grad}. 
By composing these mappings, the Jacobians associated with target physical quantities can be systematically constructed within a unified framework.

\subsection{Elemental Jacobian mappings}
The following lemmas are derived as variational forms of the comprehensive motion computations introduced in Section~\ref{sec:cmcomp}.
Table~\ref{tab:lemma_structure} summarizes the corresponding comprehensive motion computation equations and the related expressions associated with each lemma.

\begin{lemma}
\label{lem:j_vel}
The Jacobian mapping from variations of generalized coordinates to variations of joint spatial velocity is given by
\begin{subequations}
\begin{align}
    \label{joint_spatial_cmtan_trans}
    \cmvec{{\cmptanv_{\joint_i}}}{(\dindx)}
    &=
    \cmgjacobarrow{\cmjang_{\joint_i}}{\cmptanv_{\joint_i}}
    \delta \cmvec{{\cmjang_{\joint_i}}}{(\dindx)}, 
    \\
    &
    \cmgjacobarrow{\cmjang_{\joint_i}}{\cmptanv_{\joint_i}}
    =
    \cmslctm_{\joint_i}.
\end{align}
\end{subequations}
\end{lemma}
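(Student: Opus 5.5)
The plan is to take the variation of the joint-level kinematic relation \eqref{joint_spatial_cm_trans} and match its two pieces, the pose part and the velocity part, against the concatenated tangent $\cmptanv_{\joint_i}$. By definition, $\cmptanv_{\joint_i} = [\delta\bm{a}_{\joint_i}^{\trans}\ \delta\cmv_{\joint_i}^{\trans}]^{\trans}$, where $\adjsp{\delta\bm{a}_{\joint_i}} = \spm_{\joint_i}^{-1}\delta\spm_{\joint_i}$ is the tangent of the joint transformation. Likewise, $\delta\cmjang_{\joint_i} = [\delta\jang_{\joint_i}^{\trans}\ \delta\cmjvel_{\joint_i}^{\trans}]^{\trans}$. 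I will therefore show two things: $\delta\bm{a}_{\joint_i} = \slctm_{\joint_i}\delta\jang_{\joint_i}$, and $\delta\cmv_{\joint_i} = \cmslctm_{\joint_i}\delta\cmjvel_{\joint_i}$. Stacking these blockwise gives the block-diagonal matrix $\mathrm{diag}(\slctm_{\joint_i},\dots,\slctm_{\joint_i})$ with $\dindx+2$ blocks. This is the selection matrix $\cmslctm_{\joint_i}$ taken at the order that matches the dimension of $\cmjang_{\joint_i}$, which is the claimed Jacobian.

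\emph{Velocity part.} The selection matrix $\slctm_{\joint_i}$ in \eqref{eq:select} is constant: it depends only on the joint type and axis. Hence $\cmslctm_{\joint_i}$ is constant as well. Varying \eqref{joint_spatial_cm_trans} then gives directly $\delta\cmv_{\joint_i} = \cmslctm_{\joint_i}\delta\cmjvel_{\joint_i}$, with no additional terms.

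\emph{Pose part.} This is the only step that requires some care. For a 1-DoF joint, the joint transformation is $\spm_{\joint_i} = \exp(\adjsp{\slctm_{\joint_i}}\jang_{\joint_i})$ up to a constant offset that can be absorbed into the parent frame. Because $\slctm_{\joint_i}$ is a fixed axis, $\spm_{\joint_i}^{-1}\delta\spm_{\joint_i} = \adjsp{\slctm_{\joint_i}\delta\jang_{\joint_i}}$. This is the same computation that yields $\spv_{\joint_i} = \slctm_{\joint_i}\jvel_{\joint_i}$ in \eqref{eq:sp_joint_vel}, with $\partial/\partial t$ replaced by $\delta$.

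I expect the main obstacle to be the multi-DoF joints, spherical and floating. There the exponential coordinates do not commute, so the naive identity fails unless $\jang_{\joint_i}$ is interpreted as local increments, that is, $\delta\jang$ taken in the local tangent space, consistent with how $\jvel$ is defined through $\spv_{\joint_i} = \slctm_{\joint_i}\jvel_{\joint_i}$. I would handle this by adopting that convention explicitly. The generalized-coordinate variation is then defined on the tangent space, exactly as the time derivative is, so the pose relation holds by the same definition that gives \eqref{eq:sp_joint_vel}.

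With both parts in hand, the final step is to stack them. Combining $\delta\bm{a}_{\joint_i} = \slctm_{\joint_i}\delta\jang_{\joint_i}$ with $\delta\cmv_{\joint_i} = \cmslctm_{\joint_i}\delta\cmjvel_{\joint_i}$ gives $\cmptanv_{\joint_i} = \cmslctm_{\joint_i}\delta\cmjang_{\joint_i}$, which is the statement of the lemma.
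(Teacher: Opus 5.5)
Your proposal is correct and follows the same route as the paper, whose entire proof is the variation of \eqref{joint_spatial_cm_trans} with a constant selection matrix. Your separate treatment of the pose block $\delta\bm{a}_{\joint_i}=\slctm_{\joint_i}\delta\jang_{\joint_i}$ (exponential coordinates for 1-DoF joints, local tangent increments for spherical and floating joints) and your observation that $\cmslctm_{\joint_i}$ needs $\dindx+2$ diagonal blocks both fill in details that the paper's one-line argument leaves implicit.
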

\proof
This result follows directly from the variation of \eqref{joint_spatial_cm_trans}.

\begin{lemma}
\label{lem:cm_vel}
The Jacobian mapping from joint spatial velocity to link spatial velocity is given by the following sequence of linear mappings:
\begin{itemize}
    \item \textbf{Step 1: Tangent transformation.} 
    Map the variation to the tangent space at the joint:
    \begin{subequations}
    \begin{align}
        \label{var_cmv_1}
        \cmvec{{\cmtanv_{\joint_i}}}{(\dindx+1)}
        &= 
        \cmgjacobarrow{\cmptanv_{\joint_i}}{\cmtanv_{\joint_i}}
        \cmvec{{\cmptanv_{\joint_i}}}{(\dindx)},
        \\
        &
        \cmgjacobarrow{\cmptanv_{\joint_i}}{\cmtanv_{\joint_i}}
        = 
        \cmvec{\cmtanmap}{(\dindx+1)}^{-1}
    \end{align}
    \end{subequations}
    \item \textbf{Step 2: Kinematic propagation.}
    Propagate the tangent vector along the kinematic chain:
    \begin{subequations}
    \begin{align}
        \label{var_cmv_2}
        \cmtanv_{\link_i} 
        =
        \sum_{j \in \parentss{i}}
        \cmgjacobarrow{\cmtanv_{\joint_j}}{\cmtanv_{\link_i}}
        \cmtanv_{\joint_j} 
        ,
        \\
        \cmgjacobarrow{\cmtanv_{\joint_j}}{\cmtanv_{\link_i}}
        =
        {}^{\link_i}\cmtm_{\link_{\parent{\joint_i}}}.
    \end{align}
    \end{subequations}
    \item \textbf{Step 3: Velocity recovery.}
    Recover the spatial velocity variation at the link:
    \begin{subequations}
    \begin{align}
        \label{var_cmv_3}
        \delta \cmvec{{\cmv_{\link_i}}}{(\dindx)} &= \cmgjacobarrow{\cmtanv_{\link_i}}{\cmv_{\link_i}} \cmvec{{\cmtanv_{\link_i}}}{(\dindx+1)}, \\
        &\cmgjacobarrow{\cmtanv_{\link_i}}{\cmv_{\link_i}} 
        =  
        \bm{\mathfrak{U}}(\adjcm{\cmvec{{\cmv_{\link_i}}}{(\dindx)}}).
    \end{align}
    \end{subequations}
\end{itemize}
\end{lemma}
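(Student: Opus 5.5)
The plan is to derive each of the three steps as the variational form of an equation already established in Section~\ref{sec:cmcomp}, using the elemental relations of Section~\ref{sec:elem_grad}. We then compose them.

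\textbf{Step 1.} Apply the relation \eqref{cm_tan_map_2} to the joint frame. The joint transformation ${}^{\parent{\link_i}}\cmtm_{\joint_i}$ is itself a CMTM, with comprehensive velocity $\cmv_{\joint_i}$ and tangent vector $\cmtanv_{\joint_i}$. So \eqref{lie_alg_time_var} holds verbatim for it, and \eqref{cm_tan_map_2} gives $\cmptanv_{\joint_i} = \cmtanmap_{(\dindx+1)}\cmtanv_{\joint_i}$. Here $\cmptanv_{\joint_i}$ stacks $\delta\bm{a}_{\joint_i}$ and $\delta\cmv_{\joint_i}$, which by Lemma~\ref{lem:j_vel} is exactly the joint variation.

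The main point to check is that $\cmtanmap_{(\dindx+1)}$ is invertible. I would verify this from its block structure: the first block row is $[\eyem_6\ \zerom]$, and the remaining block is $\bm{\mathfrak{U}}$. Restricted to the trailing columns, $\bm{\mathfrak{U}}$ is lower block triangular with diagonal blocks $\ell\,\eyem_6$ for $\ell=1,\dots,\dindx+1$, and the $\adjcm{\cmv}$ contribution is shifted strictly below the diagonal. Hence the whole matrix is block lower triangular with nonsingular diagonal, and it can be inverted by forward substitution. This is the explicit inverse deferred to Appendix~\ref{apdx_cmtm_tan_map}. Inverting yields \eqref{var_cmv_1}.

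\textbf{Step 2.} Vary the comprehensive forward kinematics \eqref{cmtm_fk}. Using \eqref{cmtm_var} on each factor and left-multiplying by $\cmtm_{\link_i}^{-1}$ gives the recursion \eqref{tan_dfk}. This recursion expresses $\cmtanv_{\link_i}$ as $\cmtanv_{\joint_i}$ plus the parent link's tangent transported by the inverse relative CMTM, which is the analogue of \eqref{cm_dkine}. It has exactly the form \eqref{eq:structure_main}, with $\lambda(i)=\parent{i}$, $\bm{y}_i = \cmtanv_{\joint_i}$ and ${}^i\bm{X}_j = \cmtm_i^{-1}\cmtm_j$; the CMTMs form a matrix group since they satisfy the chain rule and are invertible. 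The acyclicity condition \eqref{eq:set_def} holds on the kinematic tree. The closed form \eqref{recursive_close_form} then gives the sum over $\parentss{i}$ with coefficients ${}^{\link_i}\cmtm$ of the frames where the joint tangents are expressed, which is \eqref{var_cmv_2}.

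\textbf{Step 3.} Apply \eqref{lie_alg_time_var} to $\cmtm_{\link_i}$, rewriting the time derivative of $\cmtanv$ through \eqref{dt_cmtanv}. Collecting terms gives $\delta\cmv_{\link_i} = \bm{\mathfrak{U}}(\adjcm{\cmv_{\link_i}})\,\cmtanv_{\link_i,(\dindx+1)}$, i.e.\ \eqref{cm_tan_map}, which is \eqref{var_cmv_3}.

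One order-bookkeeping issue needs care throughout. The tangent is needed up to order $\dindx+1$, while the velocity variation only goes up to order $\dindx$. I would confirm that Step 2 is applied at order $\dindx+1$, using $\cmtm_{(\dindx+1)}$, and that the truncations are consistent because all operators involved are lower block triangular.

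The main obstacle I expect is the Step 1 invertibility argument together with the identification of the joint's $\delta\bm{a}$ component with $\delta\jang$ through $\slctm$. If the frame convention for ${}^{\link_i}\cmtm$ in Step 2 is ambiguous, I would fix it by matching the $\dindx=0$ case to \eqref{sp_basic_jacob}.
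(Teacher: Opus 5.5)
Your proposal is correct and follows the paper's own argument. Steps 1 and 3 are read off from \eqref{cm_tan_map_2} (inverted) and \eqref{cm_tan_map}, and Step 2 is the closed-form expansion \eqref{recursive_close_form} of the recursion \eqref{tan_dfk} with $\lambda(i)=\parent{i}$. Your extra work fills in details the paper leaves implicit: the block-triangular invertibility of $\cmvec{\cmtanmap}{(\dindx+1)}$, deriving \eqref{tan_dfk} by varying \eqref{cmtm_fk}, and carrying out Step 2 at order $\dindx+1$. Your reading of the Step 2 coefficient as ${}^{\link_i}\cmtm_{\link_j}$ is what the closed form actually produces.
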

\proof
Steps 1 and 3 are obtained by reorganizing the previously derived relations in \eqref{cm_tan_map_2} and \eqref{cm_tan_map}.
Step 2 is obtained by applying the closed-form expansion \eqref{recursive_close_form}, since \eqref{tan_dfk} has the same recursive structure as \eqref{eq:structure_main}, allowing the propagation to be expressed as a summation over the ancestral joints.

\begin{lemma}
\label{lem:lmom}
The Jacobian mapping from variations of spatial velocity to variations of momentum in link space is given by
\begin{subequations}
\begin{align}
    \label{delta_cmtm_moment}
    \delta {\cmmom_{\link_i}} 
    = 
    \cminertia_{\link_i} 
    \delta {\cmv_{\link_i}}, \\
    \cmgjacobarrow{\cmv_{\link_i}}{\cmmom_{\link_i}}
    =
    \cminertia_{\link_i}.
\end{align}
\end{subequations}
\end{lemma}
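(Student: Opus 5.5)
The plan is to take the variation of the defining relation \eqref{def_cmtm_moment}, $\cmmom_{\link_i} = \cminertia_{\link_i}\cmv_{\link_i}$, directly, in the same way as the proof of Lemma~\ref{lem:j_vel} takes the variation of \eqref{joint_spatial_cm_trans}. The only point requiring care is to justify that $\cminertia_{\link_i}$ does not vary under a variation of the generalized coordinates and their higher-order derivatives. Once that is established, the product rule gives $\delta\cmmom_{\link_i} = \cminertia_{\link_i}\delta\cmv_{\link_i}$ immediately.

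\textbf{Step 1: constancy of the inertia.} By the construction in the comprehensive dynamics computation, $\cminertia_{\link_i} = \mathrm{diag}([\spinertia_{\link_i}\cdots\spinertia_{\link_i}])$. Here $\spinertia_{\link_i}$ is the spatial inertia of link $i$ expressed in its own local frame, since $\spmom_{\link_i}$ and $\spv_{\link_i}$ in \eqref{def_moment} are both local quantities. For a rigid link this matrix is a fixed model parameter, determined by the mass, the center-of-mass offset and the rotational inertia about the body frame. It therefore depends neither on $\jang$ nor on any component of $\cmjvel$, and $\delta\spinertia_{\link_i}=\bm{O}$. It follows that $\delta\cminertia_{\link_i}=\bm{O}$ as well.

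This is also why $\cminertia_{\link_i}$ is block-diagonal with identical blocks. Differentiating $\spmom=\spinertia\spv$ $\ell$ times in $t$ and dividing by $\ell!$ yields $\spmom^{(\ell|t!)} = \spinertia\,\spv^{(\ell|t!)}$, because every derivative of the constant $\spinertia$ vanishes.

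\textbf{Step 2: product rule.} Taking the variation of \eqref{def_cmtm_moment} gives
$\delta\cmmom_{\link_i} = \delta\cminertia_{\link_i}\,\cmv_{\link_i} + \cminertia_{\link_i}\,\delta\cmv_{\link_i} = \cminertia_{\link_i}\,\delta\cmv_{\link_i}$.
Reading off the linear map, we obtain $\cmgjacobarrow{\cmv_{\link_i}}{\cmmom_{\link_i}} = \cminertia_{\link_i}$.

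\textbf{Step 3: consistency with the Jacobian chain.} The relation should be interpreted as acting on the variation $\delta\cmvec{{\cmv_{\link_i}}}{(\dindx)}$ produced in Step~3 of Lemma~\ref{lem:cm_vel}. The dimensions match, since both $\cmv_{\link_i}$ and $\cmmom_{\link_i}$ lie in $\mathbb{R}^{6(\dindx+1)}$. Composing with that lemma then gives the link momentum Jacobian listed in Table~\ref{tab:jacobian_summary}.

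I expect no genuine obstacle here. The only subtle point is Step~1, namely that the inertia is constant because it is taken in the body frame. This is exactly why the lemma is stated for local link momentum and not for the world-frame momentum $\wcmmom$. In the world frame the transformation ${}^{\world}\cmtm^{\ast}_{\link_i}$ does vary, and that case is treated separately in Lemma~\ref{lem:mom}.
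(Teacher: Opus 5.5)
Your proposal is correct and is essentially the paper's proof: the paper takes the variation of \eqref{def_cmtm_moment} and notes that the block-diagonal comprehensive inertia is constant. Your extra justification, that this constancy holds because the spatial inertia is a fixed body-frame parameter of a rigid link, is a faithful elaboration of that one-line argument.
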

\proof
This follows from the variation of \eqref{def_cmtm_moment}, noting that $\cminertia_{\link_i}$ is constant.

\begin{lemma}
    \label{lem:mom}
    mapping from variation of link momentum to variation of joint momentum is given by the following sequence of linear mappings:
    \begin{itemize}
        \item \textbf{Step 1: World transformation.} Transform the variation of link momentum to the world frame:
        \begin{subequations}
        \begin{align}
            \label{lem_var_wmom}
            &\delta \wcmmom_{\link_i}
            =
             \cmgjacobarrow{\cmmom_{\link_i}}{\wcmmom_{\link_i}}
            \delta \cmmom_{\link_i}
            +
            \cmgjacobarrow{\cmtanv_{\link_i}}{\wcmmom_{\link_i}}
            \cmtanv_{\link_i},
            \\
            &\quad\cmgjacobarrow{\cmmom_{\link_i}}{\wcmmom_{\link_i}}
            =
            {}^{\world}\cmtm_{\link_i}^{\ast}, 
            \\
            &\quad\cmgjacobarrow{\cmtanv_{\link_i}}{\wcmmom_{\link_i}}
            =
            {{}^{\world}\cmtm_{\link_i}^{{\ast}}}
            [\cmmom_{\link_i} \widehat{\times}^{\ast}_{6k}].
        \end{align}    
        \end{subequations}
       \item \textbf{Step 2: Momentum propagation.}
        Propagate the transformed world momentum variations from link space to joint space over the child links:
        \begin{subequations}
        \begin{align}
            \label{var_link_world_mom_local_mom}
            \delta \wcmmom_{\joint_i} 
            &=
            \sum_{j \in \childrens{i}}
            \cmgjacobarrow{\wcmmom_{\link_j}}{\wcmmom_{\joint_i}}
            \delta \wcmmom_{\link_j}, \\
            \cmgjacobarrow{\wcmmom_{\link_j}}{\wcmmom_{\joint_i}}
            &=
            \bm{E}.
        \end{align}
        \end{subequations}

        \item \textbf{Step 3: Local recovery.}
        Recover the local frame representation:
        \begin{subequations}
        \begin{align}
            \label{j_cmt_mom_var}
            \delta \cmmom_{\joint_i}
            &=
            \cmgjacobarrow{\wcmmom_{\joint_i}}{\cmmom_{\joint_i}}
            \delta \wcmmom_{\joint_i}
            +
            \cmgjacobarrow
            {\cmtanv_{\link_i}}
            {\cmmom_{\joint_i}}
            \cmtanv_{\link_i},
            \\
            \quad&
            \cmgjacobarrow{\wcmmom_{\joint_i}}{\cmmom_{\joint_i}}
            =
            {{}^{\world}\cmtm_{\joint_i}^{{\ast}^{-1}}}
            ,\\
            \quad&
            \cmgjacobarrow
            {\cmtanv_{\link_i}}
            {\cmmom_{\joint_i}}
            =
            -
            \cadjcm{({{}^{\world}\cmtm_{\joint_i}^{{\ast}^{-1}}}
            \wcmmom_{\joint_i})}.
        \end{align}
        \end{subequations}
    \end{itemize}
\end{lemma}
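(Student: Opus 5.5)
The three steps are variations of the three identities that relate link momentum, world momentum and joint momentum. Steps 1 and 3 follow from the product rule together with \eqref{cmtm_var_arb_vec}. Step 2 is the variation of a linear recursion whose transformation matrices are constant.

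\textbf{Step 1.} We vary the definition \eqref{eq:w_cm_mom} with $\#=\link$, which gives
\begin{align*}
\delta\wcmmom_{\link_i}=\delta({}^{\world}\cmtm^{\ast}_{\link_i})\,\cmmom_{\link_i}+{}^{\world}\cmtm^{\ast}_{\link_i}\,\delta\cmmom_{\link_i}.
\end{align*}
The second term already has the stated coefficient ${}^{\world}\cmtm^{\ast}_{\link_i}$. For the first term we use the analogue of \eqref{cmtm_var} for the force-type CMTM, namely $\delta\cmtm^{\ast}=\cmtm^{\ast}\cadjcm{\cmtanv}$. It holds because $\cmtm^{\ast}$ is generated by the recursion of $\cmtm$ with $\adjsp{\cdot}$ replaced by $\cadjsp{\cdot}$, so the construction \eqref{cmtm_tan_vec} carries over with the same tangent vector $\cmtanv_{\link_i}$. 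We then apply the swap rule as in \eqref{cmtm_var_arb_vec}, $\cadjcm{\cmtanv}\cmmom=[\cmmom\,\widehat{\times}^{\ast}_{6k}]\cmtanv$. This yields the stated term ${}^{\world}\cmtm^{\ast}_{\link_i}[\cmmom_{\link_i}\widehat{\times}^{\ast}_{6k}]\cmtanv_{\link_i}$.

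\textbf{Step 2.} We vary \eqref{wcm_moment_dynamics}. In the world frame the propagation matrices are identities, so \eqref{wcm_moment_dynamics_var} has the form \eqref{eq:structure_main} with $\lambda(i)=\child{i}$, ${}^i\bm{X}_j=\eyem$ and $\bm{y}_i=\delta\wcmmom_{\link_i}$. The identity matrix trivially satisfies \eqref{eq:relative_def} and \eqref{eq:lie_def}, and the child relation is acyclic, so \eqref{eq:set_def} holds. The closed form \eqref{recursive_close_form} therefore gives a sum over $\hat{\Lambda}(i)=\childrens{i}$ with coefficient $\bm{E}$.

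\textbf{Step 3.} We invert \eqref{eq:w_cm_mom} for joints, $\cmmom_{\joint_i}=({}^{\world}\cmtm^{\ast}_{\joint_i})^{-1}\wcmmom_{\joint_i}$, where the joint frame transformation coincides with that of link $i$. Varying the inverse gives
\begin{align*}
\delta(\cmtm^{\ast-1})=-\cmtm^{\ast-1}\delta\cmtm^{\ast}\cmtm^{\ast-1}=-\cadjcm{\cmtanv_{\link_i}}\cmtm^{\ast-1}.
\end{align*}
Hence
\begin{align*}
\delta\cmmom_{\joint_i}={}^{\world}\cmtm^{\ast-1}_{\joint_i}\delta\wcmmom_{\joint_i}-\cadjcm{\cmtanv_{\link_i}}\bigl({}^{\world}\cmtm^{\ast-1}_{\joint_i}\wcmmom_{\joint_i}\bigr).
\end{align*}
Applying the swap operator turns the last term into $-[({}^{\world}\cmtm^{\ast-1}_{\joint_i}\wcmmom_{\joint_i})\,\widehat{\times}^{\ast}]\,\cmtanv_{\link_i}$, which is the stated coefficient read with the hat (swap) operator.

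\textbf{Main obstacle.} The delicate point is justifying $\delta\cmtm^{\ast}=\cmtm^{\ast}\cadjcm{\cmtanv}$ with the same tangent vector $\cmtanv$ as in \eqref{cmtm_var}. I would argue it from $\cmtm^{\ast}=\cmtm^{-\trans}$ blockwise, or equivalently from the fact that $\cadjsp{\cdot}=-\adjsp{\cdot}^{\trans}$ plays the role of $\adjsp{\cdot}$ for the dual representation. One would also check that the block ordering in the comprehensive dual operator matches the one used in \eqref{cm_force_momentum}.
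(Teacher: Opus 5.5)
Your proposal is correct and follows the paper's proof essentially step for step: the product rule combined with \eqref{cmtm_var_arb_vec} for Step 1, the closed form \eqref{recursive_close_form} applied to \eqref{wcm_moment_dynamics_var} with identity transforms for Step 2, and the inverse-variation identity \eqref{var_inv_trans_vec} for Step 3. Your explicit argument that the force-type CMTM varies with the same tangent vector fills in a step the paper uses without comment. Your reading of the Step 3 coefficient as the swapped (hat) operator is also consistent with the paper's \eqref{inv_trans_momentum_var}. One caveat: if you phrase that argument via the inverse-transpose, it must be the blockwise transpose of the comprehensive inverse, since the full inverse-transpose is block upper-triangular.
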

\proof
This mapping is constructed by taking the variation of the momentum transformations across coordinate frames and their accumulation in the world frame.


\paragraph{Step 1 (World transformation).}
We begin with the variation of the transformation from the link frame to the world frame in \eqref{eq:w_cm_mom}:
\begin{equation}
\label{cm_world_local_mom_variant}
\delta \wcmmom_{\link_i}
=
\delta \left({}^{\world}\cmtm_{\link_i}^{\ast}\right) \cmmom_{\link_i}
+
{}^{\world}\cmtm_{\link_i}^{\ast} \delta \cmmom_{\link_i}.
\end{equation}
Applying \eqref{cmtm_var_arb_vec}, the first term becomes
\begin{equation}
\label{var_cm_ast_world}
\delta \left({}^{\world}\cmtm_{\link_i}^{\ast}\right) \cmmom_{\link_i}
=
{}^{\world}\cmtm_{\link_i}^{\ast}
[\cmmom_{\link_i} \widehat{\times}^{\ast}_{6\cdot\dindx}]
\cmtanv_{\link_i}.
\end{equation}
Substituting \eqref{var_cm_ast_world} into \eqref{cm_world_local_mom_variant}, we obtain the first mapping in Lemma~\ref{lem:mom}.

\paragraph{Step 2 (Momentum propagation).}
Next, the second mapping in Lemma~\ref{lem:mom} is derived by considering the accumulation of the variation of the moment in the world frame, as expressed in \eqref{wcm_moment_dynamics_var}, and applying \eqref{recursive_close_form}.

\paragraph{Step 3 (Local recovery).}
Finally, we recover the local joint momentum from its world representation.
This is derived from the inverse transformation
$\cmmom_{\joint_i} = ({}^{\world}\cmtm_{\link_i}^{\ast})^{-1} \wcmmom_{\joint_i}$.
Taking its variation yields
\begin{align}
    \label{momentum_joint_relation_inv_var}
    \delta \cmmom_{\joint_i}
    =
    {{}^{\world}\cmtm_{\link_i}^{{\ast}^{-1}}}
    \delta \wcmmom_{\joint_i}
    +
    \delta {{}^{\world}\cmtm_{\link_i}^{{\ast}^{-1}}}
    \wcmmom_{\joint_i}.
\end{align}
The second term represents the variation of the inverse transformation.
Applying \eqref{cmtm_var_arb_vec} and \eqref{var_inv_trans_vec} in Appendix~\ref{app:inv_var}, we obtain
\begin{align}
    \label{inv_trans_momentum_var}
    \delta {{}^{\world}\cmtm_{\link_i}^{{\ast}^{-1}}}
    \wcmmom_{\joint_i}
    =
    -
    \cadjcmt{({{}^{\world}\cmtm_{\link_i}^{\ast}}^{-1}\wcmmom_{\joint_i})}
    \cmtanv_{\link_i}.
\end{align}
Substituting this into \eqref{momentum_joint_relation_inv_var} directly yields the third mapping in Lemma~\ref{lem:mom}.

\begin{lemma}
\label{lem:mom_force}
The Jacobian mapping from $\delta \cmmom_{\#_i}$ and $\delta \cmv_{\link_i}$ to $\delta \cmforce_{\#_i}$ is given by
\begin{subequations}
\begin{align}
    \label{var_cmmom_cmforce}
    &\delta \cmvec{{\cmforce_{\#_i}}}{(\dindx)}
    =
    \cmgjacobarrow{\cmmom_{\#_i}}{\cmforce_{\#_i}}
    \delta \cmvec{{\cmmom_{\#_i}}}{(\dindx+1)}
    +
    \cmgjacobarrow{\cmv_{\link_i}}{\cmforce_{\#_i}}
    \delta \cmvec{{\cmv_{\link_i}}}{(\dindx+1)},
    \\
    &
    \cmgjacobarrow{\cmmom_{\#_i}}{\cmforce_{\#_i}}
    =
    \bm{\mathfrak{U}}(\cadjcm{\cmvec{{\cmv_{\link_i}}}{(\dindx)}{}}),
    \\
    &
    \cmgjacobarrow{\cmv_{\link_i}}{\cmforce_{\#_i}}
    =
    \begin{bmatrix}
        [\cmvec{{\cmmom_{\#_i}}}{(\dindx)} {\widehat{\times}}_{6\dindx}^{\ast}] & \zerom_{6(\dindx+1)\times 6}
    \end{bmatrix}.
\end{align}
\end{subequations}
\end{lemma}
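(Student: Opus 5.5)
We take the variation of the moment-to-force relations \eqref{cm_force_momentum} and \eqref{cm_force_momentum_joint}, which share one form for $\# = \link$ and $\# = \joint$. Both read
\begin{align*}
\cmvec{{\cmforce_{\#_i}}}{(\dindx)} = \frac{\partial}{\partial t}\cmvec{{\cmmom_{\#_i}}}{(\dindx)} + \cadjcm{\cmvec{{\cmv_{\link_i}}}{(\dindx)}}\cmvec{{\cmmom_{\#_i}}}{(\dindx)}.
\end{align*}
Since the variation $\delta$ and the time derivative commute, as already used in \eqref{tan_time_tan_rel}, the product rule gives three terms:
\begin{align*}
\delta \cmvec{{\cmforce_{\#_i}}}{(\dindx)} = \frac{\partial}{\partial t}\delta\cmvec{{\cmmom_{\#_i}}}{(\dindx)} + \cadjcm{\cmvec{{\cmv_{\link_i}}}{(\dindx)}}\delta\cmvec{{\cmmom_{\#_i}}}{(\dindx)} + \bigl(\delta\,\cadjcm{\cmvec{{\cmv_{\link_i}}}{(\dindx)}}\bigr)\cmvec{{\cmmom_{\#_i}}}{(\dindx)}.
\end{align*}

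The first two terms are the variational counterpart of \eqref{link_mom_to_force} and \eqref{joint_mom_to_force}. By \eqref{dt_diff}, $\frac{\partial}{\partial t}\delta\cmvec{{\cmmom}}{(\dindx)} = [\,\zerom\ \natm_{(1:\dindx+1)}]\,\delta\cmvec{{\cmmom}}{(\dindx+1)}$. Here we use that the $\ell$-th block of $\delta\cmvec{\cmmom}{(\dindx+1)}$ is the $\ell$-th block of $\delta\cmmom$ divided by $\ell!$, consistently across orders. The term $\cadjcm{\cmv}\,\delta\cmmom_{(\dindx)}$ equals $[\,\cadjcm{\cmv}\ \zerom\,]\,\delta\cmvec{\cmmom}{(\dindx+1)}$. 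Together they give exactly $\bm{\mathfrak{U}}(\cadjcm{\cmvec{{\cmv_{\link_i}}}{(\dindx)}})\,\delta\cmvec{{\cmmom_{\#_i}}}{(\dindx+1)}$, the same assembly used for \eqref{cm_tan_map}.

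The third term is handled by linearity of the operator. By its definition, $\cadjcm{\cdot}$ is the block lower-triangular Toeplitz matrix built from $\cadjsp{\cdot}$, and it depends linearly on its argument. Hence $\delta(\cadjcm{\cmv}) = \cadjcm{(\delta\cmv)}$. Applying the defining swap $[\bm{x}\hat{\times}]\bm{y} = [\bm{y}\times]\bm{x}$ in its dual ($\ast$) form gives
\begin{align*}
\cadjcm{(\delta\cmv_{\link_i})}\,\cmmom_{\#_i} = [\cmvec{{\cmmom_{\#_i}}}{(\dindx)}\widehat{\times}^{\ast}_{6\dindx}]\,\delta\cmvec{{\cmv_{\link_i}}}{(\dindx)}.
\end{align*}
Padding with a zero block of width $6$ expresses this in terms of $\delta\cmvec{{\cmv_{\link_i}}}{(\dindx+1)}$, which yields $\cmgjacobarrow{\cmv_{\link_i}}{\cmforce_{\#_i}}$. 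Summing the two contributions gives \eqref{var_cmmom_cmforce}.

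The main point to check is that the hat-operator identity holds at the comprehensive (block-Toeplitz) level and not only for $6$-vectors. Because each block of $\cadjcm{\bm{a}}\bm{b}$ is the convolution $\sum_m \cadjsp{\bm{a}_m}\bm{b}_{\ell-m}$, the swap can be applied blockwise with $\cadjsp{\bm{a}_m}\bm{b}_{\ell-m} = [\bm{b}_{\ell-m}\hat{\times}^{\ast}]\bm{a}_m$. Reassembling the blocks gives exactly the Toeplitz matrix $[\cmmom\widehat{\times}^{\ast}_{6\dindx}]$, so the identity transfers directly.
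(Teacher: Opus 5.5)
Your proof is correct and follows essentially the same route as the paper. The paper varies the already-assembled form \eqref{link_mom_to_force}, where the time derivative is already a constant matrix, so it only has to compute $\delta\bm{\mathfrak{U}}$, and it does so with exactly the linearity and hat-swap step you use for your third term; varying \eqref{cm_force_momentum} before assembling $\bm{\mathfrak{U}}$, as you do, just adds the commutation of $\delta$ with $\partial/\partial t$, which the paper already relies on in \eqref{tan_time_tan_rel}.
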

\proof
This expression is obtained by taking the variation of the
relationship between momentum and force.
Specifically, the variation of \eqref{link_mom_to_force} is given as follows
\begin{align}
    \label{cm_force_momentum_var}
    \delta \cmvec{{\cmforce_{\#_i}}}{(\dindx)}
    &=
    \bm{\mathfrak{U}}(\cadjcm{\cmvec{{\cmv_{\link_i}}}{(\dindx)}{}})
    \delta \cmvec{{\cmmom_{\#_i}}}{(\dindx+1)}
    \notag
    \\
    &\quad\quad+
    \delta \bm{\mathfrak{U}}(\cadjcm{\cmvec{{\cmv_{\link_i}}}{(\dindx)}{}})
    \cmvec{{\cmmom_{\#_i}}}{(\dindx+1)}.
\end{align}
The term $\delta \bm{\mathfrak{U}}(\cadjcm{\cmvec{{\cmv_{\link_i}}}{(\dindx)}{}})$ in the second term of \eqref{cm_force_momentum_var} can be computed as follows.
\begin{align}
    \label{cnatm_var}
    &\delta \bm{\mathfrak{U}}(\cadjcm{\cmvec{{\cmv_{\link_i}}}{(\dindx)}{}})
    \cmvec{{\cmmom_{\#_i}}}{(\dindx+1)} 
    \notag\\
    &\quad\quad=
    \begin{bmatrix}
        \cadjcm{\delta \cmvec{{\cmv_{\link_i}}}{(\dindx)}{}} & \zerom_{6(\dindx+1)\times 6}
    \end{bmatrix}
    \cmvec{{\cmmom_{\#_i}}}{(\dindx+1)}
    \notag \\
    &\quad\quad=
    \begin{bmatrix}
        [\cmvec{{\cmmom_{\#_i}}}{(\dindx)} {\widehat{\times}}_{6\dindx}^{\ast}] & \zerom_{6(\dindx+1)\times 6}
    \end{bmatrix}
    \delta \cmvec{{\cmv_{\link_i}}}{(\dindx+1)} 
\end{align}
By substituting \eqref{cnatm_var} to \eqref{cm_force_momentum_var}, 
we obtain \eqref{var_cmmom_cmforce} on Lemma \ref{lem:mom_force}.

\begin{lemma}
    \label{lem:torque}
    The Jacobian mapping from $\delta \cmforce_{\joint_i}$ to $\delta \cmtorque_{\joint_i}$ is given by
    \begin{subequations}
    \begin{align}
        \label{var_cm_torque}
        \delta \cmtorque_{\joint_i}
        &=
        \cmgjacobarrow{\cmforce_{\joint_i}}{\cmtorque_{\joint_i}}
        \delta \cmforce_{\joint_i}, \\
        \quad&
        \cmgjacobarrow{\cmforce_{\joint_i}}{\cmtorque_{\joint_i}}
        =
        \cmslctm_{\joint_i}^{\trans}
    \end{align}  
    \end{subequations}
\end{lemma}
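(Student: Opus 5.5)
The proof is a one-line variation of the torque relation \eqref{cm_torque_force}, in the same way Lemma~\ref{lem:j_vel} was obtained from \eqref{joint_spatial_cm_trans}. We start from $\cmtorque_{\joint_i} = \cmslctm_{\joint_i}^{\trans}\cmforce_{\joint_i}$ and apply the variation operator $\delta$ to both sides. The product rule gives
\begin{align}
\delta \cmtorque_{\joint_i} = \bigl(\delta \cmslctm_{\joint_i}^{\trans}\bigr)\cmforce_{\joint_i} + \cmslctm_{\joint_i}^{\trans}\,\delta \cmforce_{\joint_i}.
\end{align}

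The only thing to check is that the first term vanishes. By \eqref{eq:select}, each $\slctm_{\joint_i}$ is a constant matrix built from unit vectors, identity blocks and zero blocks. Its form depends only on the joint type (revolute, prismatic, spherical or floating), not on the generalized coordinates or their time derivatives. The comprehensive selection matrix $\cmslctm_{\joint_i} = \mathrm{diag}(\slctm_{\joint_i},\ldots,\slctm_{\joint_i})$ is therefore also constant. Hence $\delta\cmslctm_{\joint_i} = \zerom$, and the same holds for its transpose.

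This is the same reasoning used for the constant inertia $\cminertia_{\link_i}$ in Lemma~\ref{lem:lmom}. It contrasts with Lemmas~\ref{lem:mom} and~\ref{lem:mom_force}, where the transformations ${}^{\world}\cmtm^{\ast}$ and $\bm{\mathfrak{U}}(\cdot)$ depend on the motion and produce additional $\cmtanv$ or $\delta\cmv$ terms. Dropping the first term leaves $\delta \cmtorque_{\joint_i} = \cmslctm_{\joint_i}^{\trans}\delta\cmforce_{\joint_i}$, which identifies $\cmgjacobarrow{\cmforce_{\joint_i}}{\cmtorque_{\joint_i}} = \cmslctm_{\joint_i}^{\trans}$ as claimed in \eqref{var_cm_torque}.

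There is no real obstacle here. The one point worth stating explicitly is why $\cmslctm_{\joint_i}$ is constant: \eqref{cm_torque_force} is written in the joint's local frame, so the selection matrix is expressed in local coordinates and carries no configuration dependence. One should also confirm that the derivative orders match. Both $\cmtorque_{\joint_i}$ and $\cmforce_{\joint_i}$ are taken up to order $\dindx$, so no order-shifting matrix $\natm$ enters, unlike in Lemma~\ref{lem:mom_force}.
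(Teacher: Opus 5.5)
Your proposal is correct and matches the paper's proof, which simply takes the variation of \eqref{cm_torque_force}. Your explicit remark that the comprehensive selection matrix is constant, so the product-rule term vanishes, spells out what the paper leaves implicit.
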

\proof
This result follows directly from the variation of \eqref{cm_torque_force}.

\begin{table}[tbp]
\centering
\caption{
Relationship between each lemma, the variation of the equations leading to Section~\ref{sec:cmcomp}, and the elemental relation in Section~\ref{sec:elem_grad}.
}
\resizebox{\linewidth}{!}{
\begin{tabular}{l l c c}
\toprule
Lemma & Description & Variation (Sec.~\ref{sec:cmcomp}) & Relation (Sec.~\ref{sec:elem_grad}) \\
\midrule

Lemma~\ref{lem:j_vel} 
& Joint velocity mapping 
& \eqref{joint_spatial_cm_trans} 
& (direct mapping) \\

\midrule

\multirow{3}{*}{Lemma~\ref{lem:cm_vel}} 
& Step 1: Tangent mapping 
& \multirow{3}{*}{\eqref{cm_dkine}} 
& \eqref{cm_tan_map_2} \\
& Step 2: Kinematic propagation 
&
& \eqref{tan_dfk} and \eqref{recursive_close_form} \\
& Step 3: Velocity recovery 
& 
& \eqref{cm_tan_map}  \\

\midrule

Lemma~\ref{lem:lmom} 
& Inertia mapping 
& \eqref{def_cmtm_moment} 
& (direct mapping) \\

\midrule

\multirow{3}{*}{Lemma~\ref{lem:mom}} 
& Step 1: World transformation 
& \multirow{3}{*}{\eqref{cm_moment_dynamics}} 
& \eqref{eq:w_cm_mom} and \eqref{cmtm_var_arb_vec} \\
& Step 2: Momentum propagation 
& 
& \eqref{wcm_moment_dynamics_var} and \eqref{recursive_close_form}\\
& Step 3: Local recovery 
& 
& \eqref{eq:w_cm_mom}, \eqref{cmtm_var_arb_vec} and \eqref{var_inv_trans_vec}\\

\midrule

Lemma~\ref{lem:mom_force} 
& Force mapping 
& \eqref{cm_force_momentum} 
& \eqref{link_mom_to_force}\\

\midrule

Lemma~\ref{lem:torque} 
& Torque extraction 
& \eqref{cm_torque_force}
& (direct mapping)\\

\bottomrule
\end{tabular}
}
\label{tab:lemma_structure}
\end{table}

\subsection{Jacobian construction via composition}
The Jacobians are constructed in a hierarchical manner, following the dependency structure of physical quantities.

\paragraph{Joint velocity Jacobian.}
The comprehensive joint velocity can already be derived from Lemma~\ref{lem:j_vel}.

The Jacobian associated with the comprehensive link spatial velocity is obtained by composing the mappings in Lemma~\ref{lem:cm_vel}.
\paragraph{Joint tangent vector Jacobian.}
\textit{(Construction of $\cmtanjacobj{i}$)}
First, the tangent vector at the joint is expressed as
\begin{align}
    \label{cm_tan_joint_jacob}
    \cmvec{{\cmtanv_{\joint_i}}}{(\dindx+1)}
    = \cmtanjacobj{i} \delta \cmvec{{{\cmjang}_{\joint_i}}}{(\dindx)},
\end{align}
where $\cmtanjacobj{i}$ is defined by 
$$
    \cmtanjacobj{i} := \cmgjacobarrow{\cmptanv_{\joint_i}}{\cmtanv_{\joint_i}}
    \cmgjacobarrow{\cmjang_{\joint_i}}{\cmptanv_{\joint_i}}.
$$
This mapping is obtained by applying the expression in \eqref{var_cmv_1} of Lemma~\ref{lem:cm_vel} to the mapping toward $\cmptanv_{\joint_i}$ defined in \eqref{joint_spatial_cmtan_trans}.

\paragraph{Joint tangent vector Jacobian.}
\textit{(Construction of $\cmtanjacob{}{}$)}
Next, the tangent vector is propagated to the link as
\begin{align}
    \label{var_tan_cmtm_link_jacob}
    \cmvec{{\cmtanv_{\link_i}}}{(\dindx+1)}
    = \cmtanjacob{i}{} 
    \delta \cmvec{{\cmjang_{\joint}}}{(\dindx)}
    = \sum_{j \in \parentss{i}} 
    \cmtanjacob{i}{j} 
    \delta \cmvec{{{\cmjang}_{\joint_j}}}{(\dindx)}
\end{align}
where $\cmtanjacob{i}{j}$ is defined by 
\begin{align}
    \cmtanjacob{i}{j} &:=
    \cmgjacobarrow{\cmtanv_{\joint_i}}{\cmtanv_{\link_i}}\cmtanjacobj{j} 
\end{align}
This mapping is obtained by applying the expression in \eqref{var_cmv_2} of Lemma~\ref{lem:cm_vel} to the mapping toward $\cmtanv_{\joint_i}$ defined in \eqref{cm_tan_joint_jacob}.

\paragraph{Joint spatial velocity Jacobian.}
\textit{(Construction of $\cmjacob{}$)}
Finally, the variation of the spatial velocity is recovered as
\begin{align}
    \label{cm_vel_link_jacob}
    \delta  \cmvec{{\cmv_{\link_i}}}{(\dindx)}
    = \cmjacob{i} \delta \cmvec{{\cmjang_\joint}}{(\dindx)},
\end{align}
where $\cmjacob{i}$ is defined by 
$$
    \cmjacob{i} := \cmgjacobarrow{\cmtanv_{\link_i}}{\cmv_{\link_i}}  \cmtanjacob{i}.
$$
This mapping is obtained by applying the expression in \eqref{var_cmv_3} of Lemma~\ref{lem:cm_vel} to the mapping toward $\cmtanv_{\link_i}$ defined in \eqref{var_tan_cmtm_link_jacob}.

The comprehensive spatial velocity $\cmv_{\link_i}$ can be expressed as a function of the generalized coordinates via \eqref{cmtm_fk}, i.e.,
$\cmv_{\link_i} = \bm{f}(\cmjang)$,
and its Jacobian is given by
$\cmjacob{i} = \frac{\partial \bm{f}}{\partial \cmjang}$.
On the other hand, the matrix $\cmjacobdt$ in \eqref{cm_basic_jacob} does not coincide with this Jacobian, i.e.,
$\cmjacobdt \neq \frac{\partial \bm{f}(\cmjang)}{\partial \cmjang}$.
However, it satisfies
$\cmjacobdt \frac{\partial \cmjang}{\partial t}
= \cmjacob{i} \frac{\partial \cmjang}{\partial t}$.
Despite this equality, $\cmjacobdt$ is not the true derivative of $\bm{f}(\cmjang)$ and therefore cannot be used for optimization.

\paragraph{Link momentum Jacobian.}
\textit{(Construction of $\cmlmomjacob{}$)}
We next construct the Jacobian associated with momentum by applying the inertia mapping to the velocity Jacobian.
By applying Lemma~\ref{lem:lmom}, the Jacobian associated with the comprehensive link momentum is obtained by composing the inertia mapping with the Jacobian of spatial velocity:
\begin{align}
    \label{link_mom_jacob}
    \delta \cmmom_{\link_i} 
    = \cmlmomjacob{i} \, \delta \cmjang_\joint,
\end{align}
where $\cmlmomjacob{i}$ is defined by 
$$
    \cmlmomjacob{i} := \cmgjacobarrow{\cmv_{\link_i}}{\cmmom_{\link_i} } \cmjacob{i}.
$$
This expression shows that the Jacobian of momentum is constructed by applying a linear transformation to the Jacobian of spatial velocity, preserving the underlying structured propagation.

\paragraph{Link world momentum Jacobian.}
\textit{(Construction of $\wcmlmomjacob{}$)}
We then extend the construction to the world-frame momentum by incorporating the transformation and propagation mappings in Lemma~\ref{lem:mom}.
Specifically, by substituting the link momentum Jacobian \eqref{link_mom_jacob} and the tangent propagation \eqref{var_tan_cmtm_link_jacob} into the world frame transformation \eqref{var_link_world_mom_local_mom} on Lemma \ref{lem:mom}, we obtain
\begin{align}
\label{link_world_mom_jacob}
\delta \wcmmom_{\link_i} 
&= 
\wcmlmomjacob{i} \delta \cmjang_{\joint},
\end{align}
where $\wcmlmomjacob{i}$ is defined by 
$$
\wcmlmomjacob{i}
:= 
\cmgjacobarrow{\cmmom_{\link_i}}{\wcmmom_{\link_i} } \cmlmomjacob{i}
+
\cmgjacobarrow{\cmtanv_{\link_i}}{\wcmmom_{\link_i} } 
\cmtanjacob{i}{}
.
$$
This expression shows that the Jacobian is obtained by combining two components: the transformed momentum Jacobian and the correction term arising from the variation of the transformation.

\paragraph{Joint world momentum Jacobian.}
\textit{(Construction of $\wcmjmomjacob{i}{}$)}
By applying the closed-form expansion of the recursive relation in \eqref{eq:structure_main}, as expressed in \eqref{eq:structure}, to the world-frame momentum propagation \eqref{lem_var_wmom} on Lemma \ref{lem:mom}, we obtain the accumulated world-frame joint momentum variation as
\begin{align}
    \label{joint_world_mom_jacob}
    \delta \wcmmom_{\joint_i} 
    =
    \wcmjmomjacob{i}{{}}  \delta \cmjang_\joint,
\end{align}
where $\wcmjmomjacob{i}{{}}$ is defined by 
$$
\wcmjmomjacob{i}{{}} 
:= 
\sum_{j \in \childrens{i}} 
\cmgjacobarrow{\wcmmom_{\link_j}}{\wcmmom_{\joint_i}}
\wcmlmomjacob{j}.
$$

\paragraph{Joint momentum Jacobian.}
\textit{(Construction of $\cmjmomjacob{}$)}
Finally, the Jacobian of joint momentum is obtained by transforming the accumulated world-frame momentum back to the local frame.
By substituting the world joint momentum Jacobian \eqref{joint_world_mom_jacob} and the tangent propagation \eqref{var_tan_cmtm_link_jacob} into the local frame transformation \eqref{j_cmt_mom_var} on Lemma \ref{lem:mom}, we obtain
\begin{align}
    \label{joint_mom_jacob}
    \delta \cmmom_{\joint_i} 
    = \cmjmomjacob{i} \delta \cmjang_{\joint},
\end{align}
where $\cmjmomjacob{i}$ is defined by 
$$
    \cmjmomjacob{i} := 
    \cmgjacobarrow{\wcmmom_{\joint_i}}{\cmmom_{\joint_i}}
    \wcmjmomjacob{i}{{}}
    +
    \cmgjacobarrow{\cmtanv_{\link_i}}{\cmmom_{\joint_i}}
    \cmtanjacob{i}.
$$

\paragraph{Link force Jacobian.}
\textit{(Construction of $\cmlforcejacob{}$)}
The Jacobian of force is constructed by combining the momentum and velocity Jacobians through the mapping defined in Lemma~\ref{lem:mom_force}.
By substituting the link momentum Jacobian \eqref{link_mom_jacob} and the link spatial velocity Jacobian \eqref{cm_vel_link_jacob} into the world frame transformation \eqref{var_cmmom_cmforce} on Lemma \ref{lem:mom_force}, we obtain
\begin{align}
    \label{cm_link_force_jacob}
    \delta \cmvec{{\cmforce_{\link_i}}}{(\dindx)}
    = 
    \cmlforcejacob{i} 
    \delta \cmvec{{\cmjang_\joint}}{(\dindx+1)},
\end{align}
where $\cmlforcejacob{i}$ is defined by 
$$
\cmlforcejacob{i} := 
\cmgjacobarrow{\cmmom_{\link_i}}{\cmforce_{\link_i}}
{\cmlmomjacob{i}} + 
\cmgjacobarrow{\cmv_{\link_i}}{\cmforce_{\link_i}}
{\cmjacob{i}}.
$$
\paragraph{Joint force Jacobian.}
\textit{(Construction of $\cmjforcejacob{}$)}
Similarly, by applying the joint moment Jacobian into \eqref{var_cmmom_cmforce}, we obtain 
\begin{align}
    \label{cm_joint_force_jacob}
    \delta \cmvec{{\cmforce_{\joint_i}}}{(\dindx)} &= \cmjforcejacob{i} 
    \delta \cmvec{{\cmjang_\joint}}{(\dindx+1)}, 
\end{align}
where $\cmjforcejacob{i}$ is defined by 
$$
    \cmjforcejacob{i} :=     
    \cmgjacobarrow{\cmmom_{\joint_i}}{\cmforce_{\joint_i}}
    \cmjmomjacob{i} + 
    \cmgjacobarrow{\cmv_{\joint_i}}{\cmforce_{\joint_i}}
    \cmjacob{i}.
$$

\paragraph{Joint torque Jacobian.}
\textit{(Construction of $\cmtorquejacob{}$)}
Finally, the Jacobian of torque is obtained by applying the selection mapping to the joint force Jacobian.
By substituting \eqref{cm_joint_force_jacob} into \eqref{var_cm_torque} on Lemma \ref{lem:torque},
we obtain
\begin{align}
\label{torque_jacob}
    \delta \cmvec{{\cmtorque_{\joint_i}}}{(\dindx)}  = \cmtorquejacob{i} 
    \delta \cmvec{{\cmjang_\joint}}{(\dindx+1)}, 
\end{align}
where $\cmtorquejacob{i}$ is defined by 
$$
    \cmtorquejacob{i} := 
    \cmgjacobarrow{\cmforce_{\joint_i}}{\cmtorque_{\joint_i}}
    \cmjforcejacob{i}.
$$
\section{Numerical verification}
\subsection{Experimental setup}

\subsubsection{Robot models}
Two robot models are used in the experiments.
The first model is a simple serial-link robot arm used to evaluate scalability with respect to the number of degrees of freedom.
Each link has a length of $1\,\mathrm{m}$, a mass of $5\,\mathrm{kg}$, a center of mass located at the geometric center of the link, and a spatial inertia vector given by
a rotational inertia tensor represented by
$\begin{bmatrix}
    I_{xx} & I_{yy} & I_{zz} & I_{xy} & I_{xz} & I_{yz}
\end{bmatrix}
= \begin{bmatrix}0.1&0.1&0.1&0&0&0\end{bmatrix}$.
The number of links is increased to generate robots with different degrees of freedom.

The second model is a 7-DoF robot arm based on the \cite{franka} model.
The robot has a total weight of approximately $18\,\mathrm{kg}$ and a maximum reach of $855\,\mathrm{mm}$.

Figure~\ref{fig:overview} (right) shows the 3-DoF version of the simple robot model and the 7-DoF robot arm used in the experiments.

\subsubsection{Trajectory optimization}
Joint trajectories are parameterized using B-splines in order to ensure smooth representations of higher-order time-derivative quantities while maintaining a compact parametrization \cite{ayusawa2019predictive,suleiman2010time}.
The optimization variables are defined as the control points of the spline representation.

By stacking all control points into a parameter vector $\bm{\theta}_{\joint_i}$, the joint trajectory and its higher-order time derivatives can be represented as linear functions of $\bm{\theta}_{\joint_i}$:
\begin{align}
\parvec{\bm{q}_{\joint_i}(t)}{k}{t}
=
\parvec{\bm{D}_p(t)}{k}{t}
\bm{\theta}_{\joint_i}.
\end{align}

The recursive definition of the B-spline basis functions and their higher-order derivatives are summarized in
\eqref{eq:D_def}  Appendix~\ref{app:bspline}.

In practical trajectory optimization problems, the optimization is performed under equality and inequality constraints associated with the robot motion and physical feasibility.
In this study, the joint trajectories were parameterized using the B-spline representation introduced above, and the optimization variables were defined as the spline parameter vector $\bm{\theta}_{\joint}$, where the subscript $\joint$ denotes the stacked representation associated with all joints.
Accordingly, the optimization problem introduced in \eqref{opt_problem} was solved as the following constrained nonlinear problem:
\begin{align}
    \min_{\bm{\theta}_{\joint}}
        f(\bm{\theta}_{\joint})
    \quad
    \mathrm{s.t.}\quad
    \bm{g}(\bm{\theta}_{\joint}) = \bm{0},
    \quad
    \bm{h}(\bm{\theta}_{\joint}) \ge \bm{0},
\end{align}
where $\bm{f}(\bm{\theta}_{\joint})$ denotes the stacked weighted residual vector corresponding to the trajectory cost terms, while $\bm{g}(\bm{\theta}_{\joint})$ and $\bm{h}(\bm{\theta}_{\joint})$ represent equality and inequality constraints, respectively.

The constraints were handled using a penalty method, where constraint violations were incorporated into the objective function as additional penalty terms with weights approximately $10^{12}$ times larger than those of the nominal trajectory cost terms.
As a result, the constrained optimization problem was approximately transformed into an unconstrained nonlinear least-squares problem while strongly discouraging constraint violations.
The resulting optimization problem was solved using a damped Gauss--Newton method with backtracking line search.
A fixed damping coefficient of $\lambda = 10^{-8}$ was used throughout the optimization.
The optimization was terminated when the norm of the optimization value update became smaller than $10^{-12}$, or when the number of iterations reached the maximum iteration count of $500$.

\subsubsection{Computation environment}

All experiments were conducted on a laptop equipped with an AMD Ryzen 9 7940HS CPU (8 cores, 16 threads, up to 5.2 GHz) and 32 GB RAM running Ubuntu 24.04 LTS.

\begin{table}[tbp]
    \centering
    \caption{Maximum absolute differences of each element between the Jacobians computed by numerical differentiation and those obtained analytically, evaluated up to the fourth derivative in 7-DOF manipulator.}
    \resizebox{\linewidth}{!}{
    \begin{tabular}{c|c c c c}
         & $k = 0$ & $k = 1$ & $k = 3$ & $k = 4$ \\
        \hline
        $\cmjacob{i}$  & $8.99 \times 10^{-9}$ & $1.34 \times 10^{-8}$ & $1.33 \times 10^{-8}$ & $1.48 \times 10^{-8}$ \\
        $\cmlmomjacob{i}$ & $1.23 \times 10^{-8}$ & $1.91 \times 10^{-8}$ & $2.71 \times 10^{-8}$ & $1.74 \times 10^{-7}$ \\
        $\wcmlmomjacob{i}$ & $1.72 \times 10^{-8}$ & $1.91 \times 10^{-7}$ & $3.67 \times 10^{-7}$ & $1.30 \times 10^{-7}$ \\
        $\wcmjmomjacob{i}{}$ & $4.06 \times 10^{-8}$ & $6.14 \times 10^{-8}$ & $7.53 \times 10^{-8}$ & $1.49 \times 10^{-7}$ \\
        $\cmjmomjacob{i}{}$ & $3.88 \times 10^{-8}$ & $5.76 \times 10^{-8}$ & $7.30 \times 10^{-8}$ & $2.11 \times 10^{-7}$ \\
        $\cmlforcejacob{i}$ & $1.61 \times 10^{-8}$ & $4.26 \times 10^{-8}$ & $2.75 \times 10^{-7}$ & $2.72 \times 10^{-7}$ \\
        $\cmjforcejacob{i}$ & $6.94 \times 10^{-8}$ & $7.34 \times 10^{-8}$ & $2.75 \times 10^{-7}$ & $2.63 \times 10^{-7}$ \\
        $\cmtorquejacob{i}$ & $7.55 \times 10^{-8}$ & $1.28 \times 10^{-7}$ & $2.04 \times 10^{-7}$ & $4.77 \times 10^{-7}$
    \end{tabular}
    }
    \label{tab:jacob_error}
\end{table}

\begin{figure}[tbp]
    \centering
    \includegraphics[width=\linewidth]{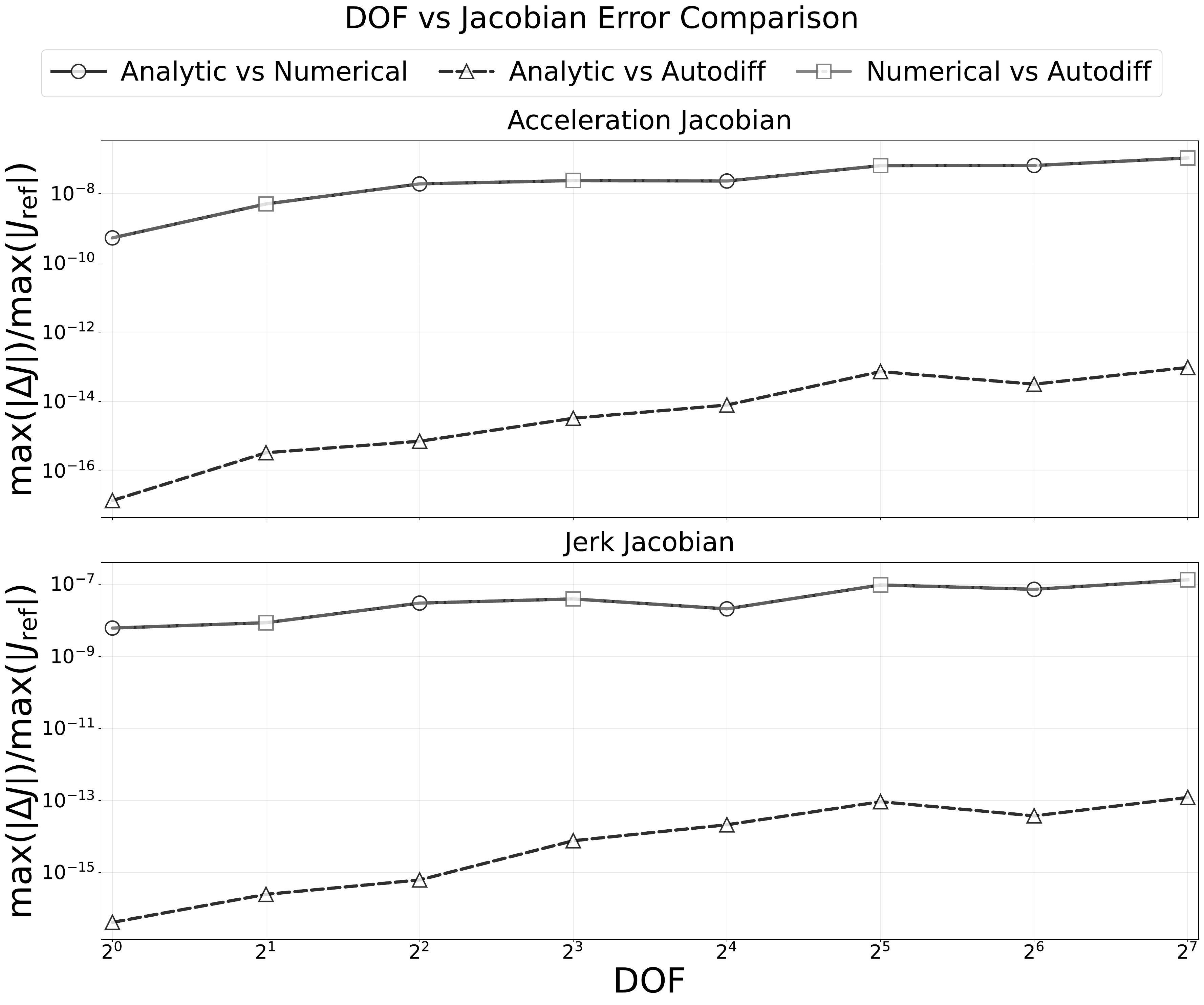}
    \caption{Comparison of gradient computation proposed analytical method, numerical difference and automatic difference.}
    \label{fig:comparison_gradient}
\end{figure}

\begin{figure*}[h]
    \centering
    \begin{subfigure}[t]{0.48\linewidth}
        \centering
        \includegraphics[width=\linewidth]{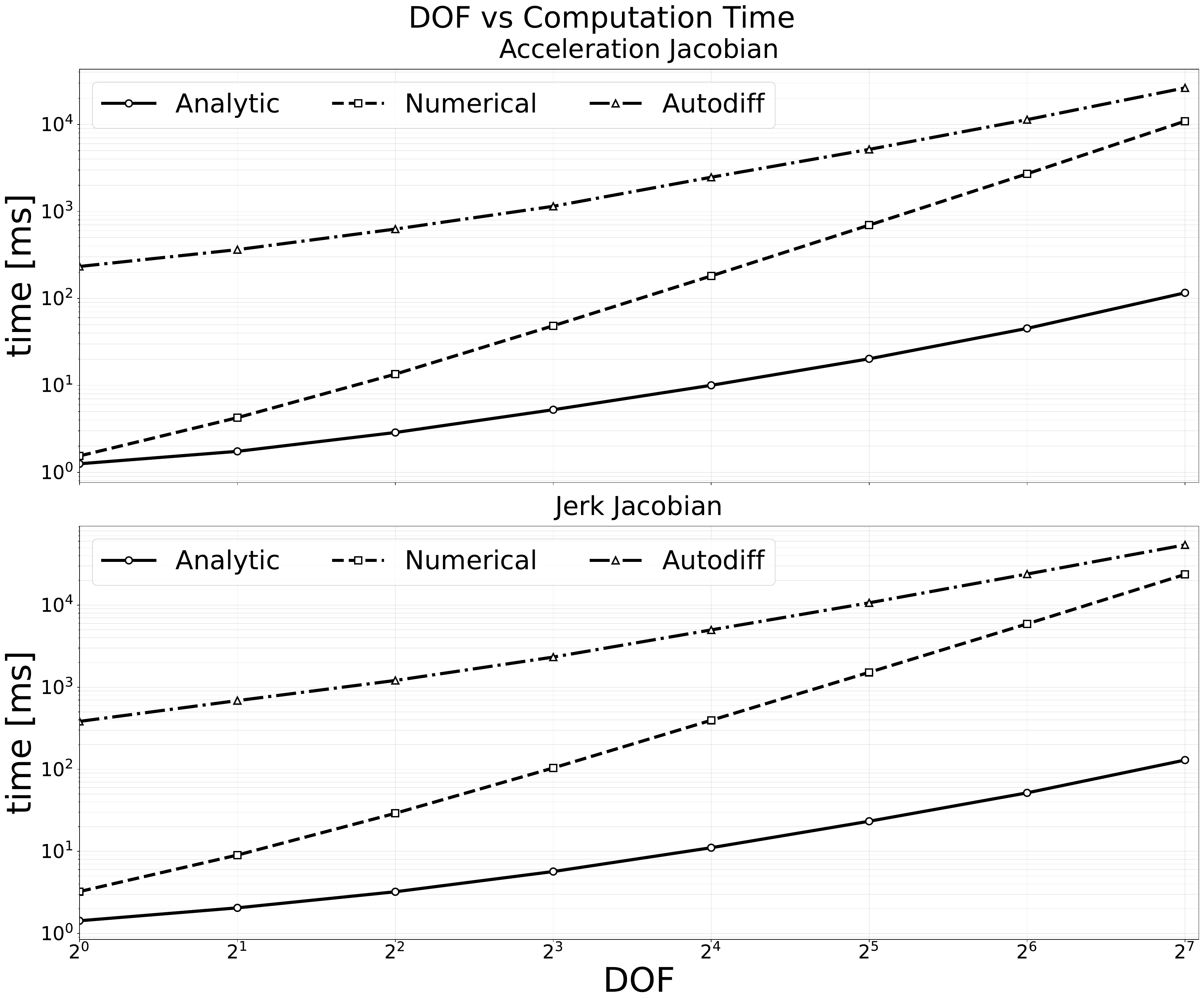}
        \caption{Computation time comparison of Jacobian matrices. Analytical, numerical, and automatic differentiation methods are compared.}
        \label{fig:comparison_gradient_time}
    \end{subfigure}
    \hfill
    \begin{subfigure}[t]{0.48\linewidth}
        \centering
        \includegraphics[width=\linewidth]{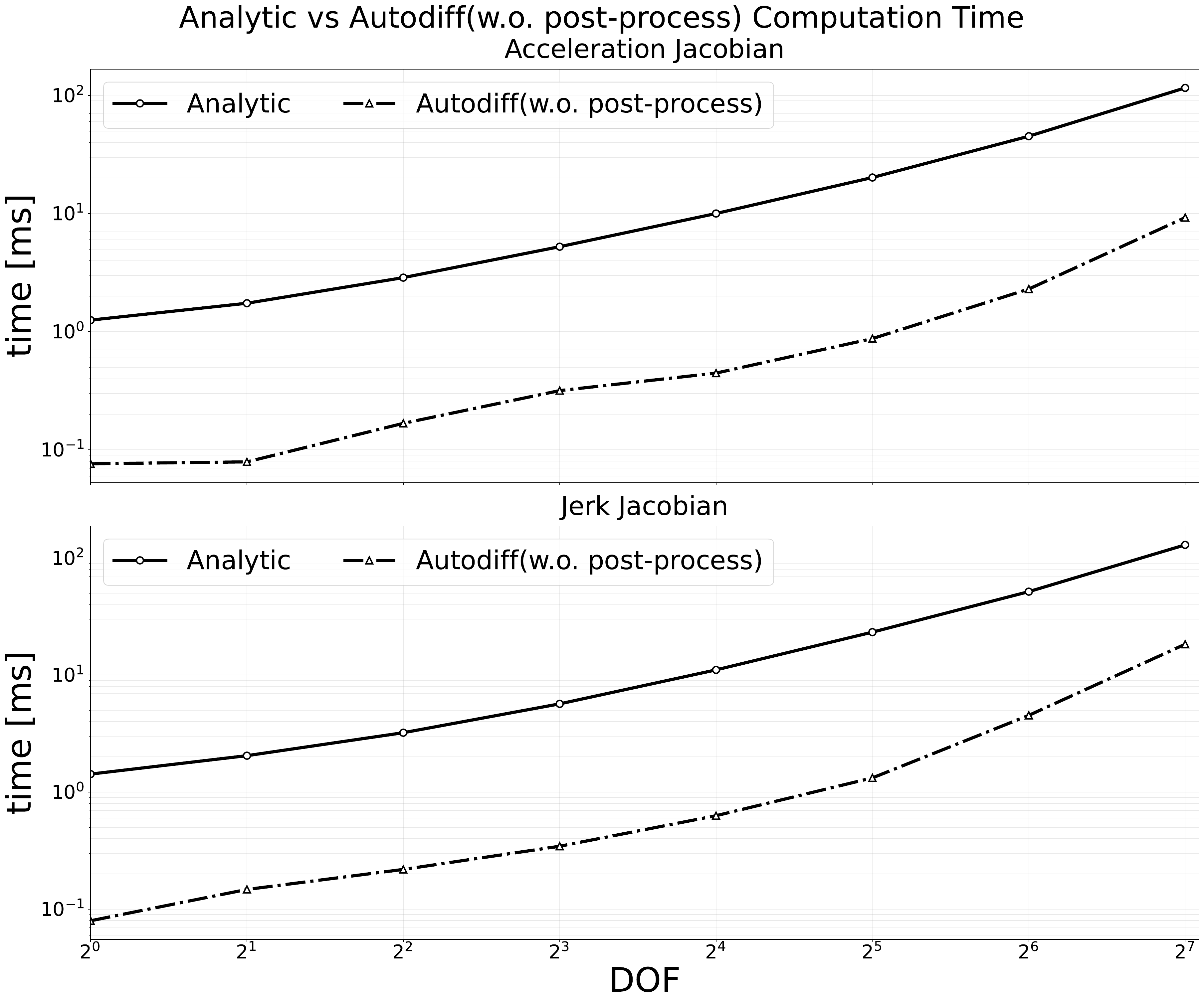}
        \caption{Evaluation time comparison after post-processing. Analytical and automatic differentiation exhibit similar computational scaling.}
        \label{fig:jacobian_eval_scaling_post}
    \end{subfigure}
    \caption{
    Computational performance of Jacobian evaluation methods.
    (Left) Overall computation time comparison including numerical differentiation.
    (Right) Evaluation time comparison between analytical and automatic differentiation after post-processing.
    }
    \label{fig:jacobian_time_comparison}
\end{figure*}

\begin{figure*}
    \centering
    \includegraphics[width=\linewidth]{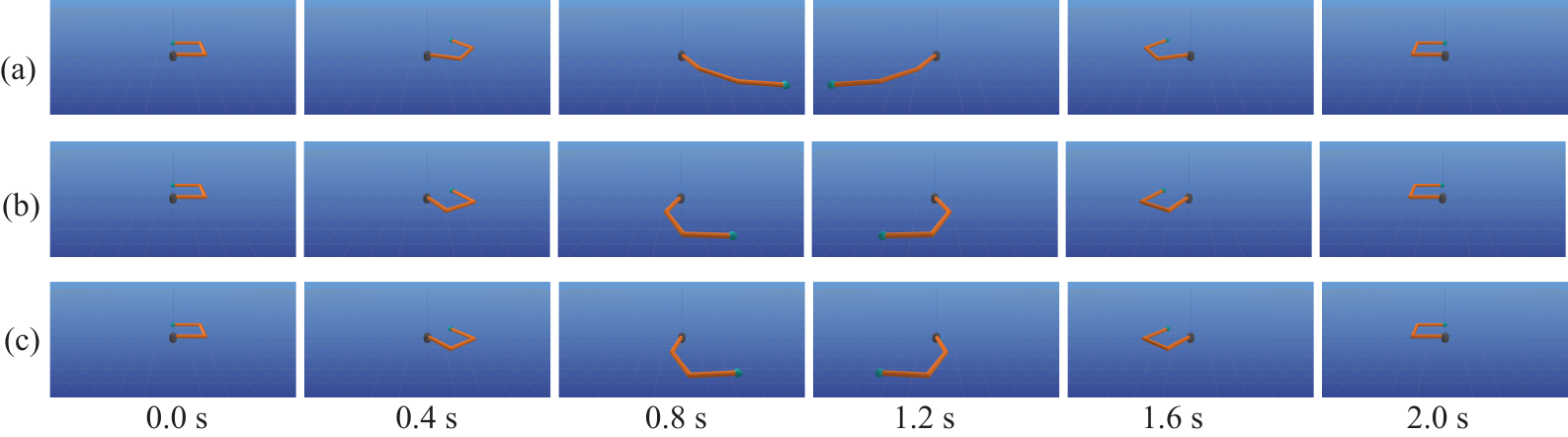}
    \caption{Snapshot of simulation in case (a), (b), (c)}
    \label{fig:3link_robot_sim}
\end{figure*}

\begin{table*}[t]
\centering
\caption{
Comparison between the true cost weights and those estimated via inverse optimal control for different systems and cost settings.
}
\resizebox{\linewidth}{!}{
\begin{tabular}{c|c|l|c|c|c}
\hline
System & Case & Cost & True cost weights $\bm{w}$ & Estimated cost weights $\bm{w}_{\text{est}}$ & $L_1$ error $\|\bm{w} - \bm{w}_{\mathrm{est}}\|_1$ \\
\hline
3-DoF & (a) & $\spv, \dot{\spv}, \ddot{\spv}$ & [0.333333, 0.333333, 0.333333] 
      & [0.333333, 0.333333, 0.333333] 
      & $2.56 \times 10^{-8}$ \\
\hline
3-DoF & (b) & $\spv, \dot{\spv}, \ddot{\spv}, \torque$& [0.25, 0.25, 0.25, 0.25] 
      & [0.25, 0.25, 0.25, 0.25] 
      & $6.87 \times 10^{-7}$ \\
\hline
3-DoF & (c) & $\spv, \dot{\spv}, \ddot{\spv}, \torque, \dot{\torque}$ & [0.2, 0.2, 0.2, 0.2, 0.2] 
      & [0.199998, 0.200001, 0.2, 0.2, 0.2] 
      & $3.89 \times 10^{-6}$ \\
\hline
7-DoF & -- & $\spv, \dot{\spv}, \ddot{\spv}, \torque$ & [0.332226, 0.332226, 0.332226, 0.003322] 
      & [0.332226, 0.332226, 0.332226, 0.003322] 
      &$1.07 \times 10^{-7}$\\
\hline
\end{tabular}
}
\label{tab:ioc_weights}
\end{table*}

\subsection{Comparative analysis of Jacobian calculation}



To evaluate the proposed Jacobian computation method, we compare it with numerical differentiation and automatic differentiation. 
We consider a serial-link robot and compute the Jacobians with respect to the generalized coordinates.

First, we evaluate the accuracy of the proposed analytical Jacobians.
The discrepancy between two Jacobians is evaluated using the normalized maximum absolute element-wise difference to account for the scale of the Jacobian entries,
\begin{align}
    e_J =
    \frac{
        \max_{i,j}
        \left|
            \left(J_{\mathrm{proposed}}\right)_{ij}
            -
            \left(J_{\mathrm{ref}}\right)_{ij}
        \right|
    }{
        \max_{i,j}
        \left|
            \left(J_{\mathrm{ref}}\right)_{ij}
        \right|
    },
\end{align}
where $J_{\mathrm{ref}}$ denotes the Jacobian obtained by either numerical differentiation or automatic differentiation.

Figure~\ref{fig:comparison_gradient} evaluates the accuracy of the proposed method by comparing the norms of Jacobians associated with the variations of velocity and acceleration.
We consider simple serial-link robots and increase the number of degrees of freedom up to $2^7 = 128$.
The horizontal axis represents the number of degrees of freedom, while the vertical axis shows the discrepancy between Jacobians.
The proposed analytical Jacobians closely match those obtained by automatic differentiation across all tested configurations.
For a 128-DoF system, the discrepancy is $9.61 \times 10^{-14}$ for velocity variations and $1.21 \times 10^{-13}$ for acceleration variations.
In contrast, when compared with numerical differentiation, the discrepancies are significantly larger.
At 128 DoF, the errors are $1.07 \times 10^{-7}$ for velocity variations and $1.33 \times 10^{-7}$ for acceleration variations, reflecting the approximation errors inherent in numerical differentiation.
We also evaluated the discrepancy between numerical differentiation and automatic differentiation.
This discrepancy exhibits nearly identical values to those observed between numerical differentiation and the proposed analytical method, resulting in overlapping curves in the figure.
This indicates that the difference between the analytical and automatic differentiation results is negligible, and that the dominant source of error arises from numerical differentiation.

To further assess the accuracy for higher-order derivatives, we conduct experiments on a 7-DoF robot manipulator (\cite{franka}) and compare the analytical Jacobians with those obtained by numerical differentiation, as summarized in Table~\ref{tab:jacob_error}.
In this setting, automatic differentiation for higher-order derivatives is not employed, and numerical differentiation is used as a reference for comparison.
As the derivative order and the complexity of the underlying computations increase, the discrepancies become more pronounced.
Nevertheless, the discrepancies remain within the order of $10^{-7}$ even for higher-order derivatives, suggesting that the analytical Jacobians are computed consistently.
The increase in discrepancy is likely related to the accumulation of numerical errors in the comparison process.

Next, we evaluate the computational efficiency. 
Figure~\ref{fig:comparison_gradient_time} shows both the norm of the difference between gradient matrices and the computation time as functions of the number of robot degrees of freedom. 
The proposed analytical Jacobian achieves significantly faster computation than numerical differentiation and is also competitive with automatic differentiation. 
This is because automatic differentiation involves a non-negligible construction cost for building the computational graph, in addition to the evaluation cost. 
For example, for a system with 128 degrees of freedom, the analytical method requires on the order of $10^2$ ms, whereas numerical differentiation requires on the order of $10^4$ ms.

After post-processing, automatic differentiation provides faster evaluation than the proposed analytical Jacobian. 
However, both methods exhibit similar computational scaling, as evidenced by the parallel trends in Figure~\ref{fig:jacobian_eval_scaling_post}.



\subsection{Verification of direct and inverse optimal calculations through simulation}

We validate the proposed higher-order Jacobian formulation through robot motion optimization based on the direct and inverse optimization framework described in Section~\ref{sec:dioc}.

First, we consider a 3-DoF simple robot arm and perform direct motion optimization under different cost functions.
The motion is generated over a trajectory duration of $2.0$ s, discretized into $200$ samples with a time step of $0.01$ s.
The joint trajectories are represented using fifth-order B-splines ($p = 5$) with $n_c = 50$ control points, providing sufficient smoothness for computing higher-order motion quantities such as jerk and torque rates.
All joint angles are constrained to move from $1.57$ rad to $-1.57$ rad, while the joint velocities at both the initial and terminal times are fixed to zero.
In addition, joint limit constraints were imposed as inequality constraints for all joints within the range $[-\pi, \pi]$.

The optimization is performed under different combinations of motion-related cost terms introduced in \eqref{opt_problem}, including joint velocity, acceleration, jerk, torque, and torque-rate costs.
We consider three cost function settings:
(a) a cost function including joint velocity, acceleration, and jerk;
(b) the cost function in (a) with an additional joint torque term; and
(c) the cost function in (b) further augmented with the first derivative of joint torque.
The optimization is carried out over joint trajectories, and the corresponding joint torques are computed from the resulting trajectories using inverse dynamics. Based on the optimized trajectories, we then estimate the cost weights by solving the inverse problem formulated in \eqref{ikkt}, where the weight vector $\bm{w}$ is required to satisfy the KKT condition $\bm{w}^{\trans}\bm{C} = \bm{0}$.

Figure~\ref{fig:cost_vertical} shows the optimized joint angle trajectories along with the corresponding joint torques computed from them. Compared to Figure~\ref{fig:cost_vertical} (a), Figure~\ref{fig:cost_vertical} (b) demonstrates that including joint torque in the cost function reduces the magnitude of joint torques. Furthermore, Figure~\ref{fig:cost_vertical} (c) shows that incorporating the derivative of joint torque leads to smoother torque profiles with reduced variation.
Table~\ref{tab:ioc_weights} summarizes the comparison between the true cost weights and those estimated via the inverse formulation in \eqref{ikkt}. The results show that the estimated weights closely match the ground-truth values, with errors on the order of $10^{-8}$ to $10^{-6}$. Although the estimation error slightly increases as the number of cost terms grows, the overall accuracy remains high, demonstrating that the proposed method can reliably recover the underlying cost weights.
Simulation snapshots for cases (a)–(c) are shown in Figure~\ref{fig:3link_robot_sim}.

To further validate the proposed framework, we conducted a simulation study on a 7-DoF robot manipulator (\cite{franka}).
In this experiment, direct optimization was performed by defining a cost function composed of multiple physical quantities, including joint velocity, acceleration, jerk, and torque.
As in the 3-DoF case, the joint trajectories are parameterized using B-spline curves, and the optimization is carried out with respect to the corresponding control points.
The same boundary conditions are imposed at $t=0.0$~s and $t=2.0$~s, where the joint angles are specified from $1.57$ to $-1.57$ and the joint velocities are set to zero at both endpoints.
The optimization yields a joint trajectory that satisfies these criteria, and the resulting joint angles and joint torques over time are shown in Figure~\ref{fig:7dof_robot_doc_result}.
The generated motion exhibits smooth transitions in both kinematic and dynamic quantities, reflecting the influence of higher-order derivative terms in the cost function. 
As in the 3-link simple robot case, assigning identical weights to all cost terms caused the torque-related cost to dominate, leading to unstable optimization.
To address this, the weights were scaled by a factor of 1/100, based on the approximate magnitude of the robot weight (mass times gravitational acceleration).
To evaluate the consistency of the proposed framework in inverse optimization, we applied the inverse optimization method to the generated trajectory. The estimated cost weights are summarized in Table~\ref{tab:ioc_weights}, together with the ground-truth weights used in the direct optimization.
The results show that the estimated weights closely match the original ones, with an estimation error of $1.07 \times 10^{-7}$, demonstrating that the proposed structured gradient computation enables reliable recovery of cost function parameters even when higher-order time derivatives are included.
The resulting robot motion obtained from the optimization is shown in Figure~\ref{fig:7link_robotsim}.

These results collectively validate that the proposed framework enables direct motion optimization involving higher-order time-derivative variables.
The experiments also demonstrate that variations in cost functions defined on higher-order physical quantities lead to distinct changes in the resulting motion.
Furthermore, the inverse optimal control results confirm that the underlying cost weights can be accurately recovered from the optimized trajectories.

\begin{figure}[tbp]
\centering

\begin{subfigure}[b]{\columnwidth}
  \centering
  \includegraphics[width=\linewidth]{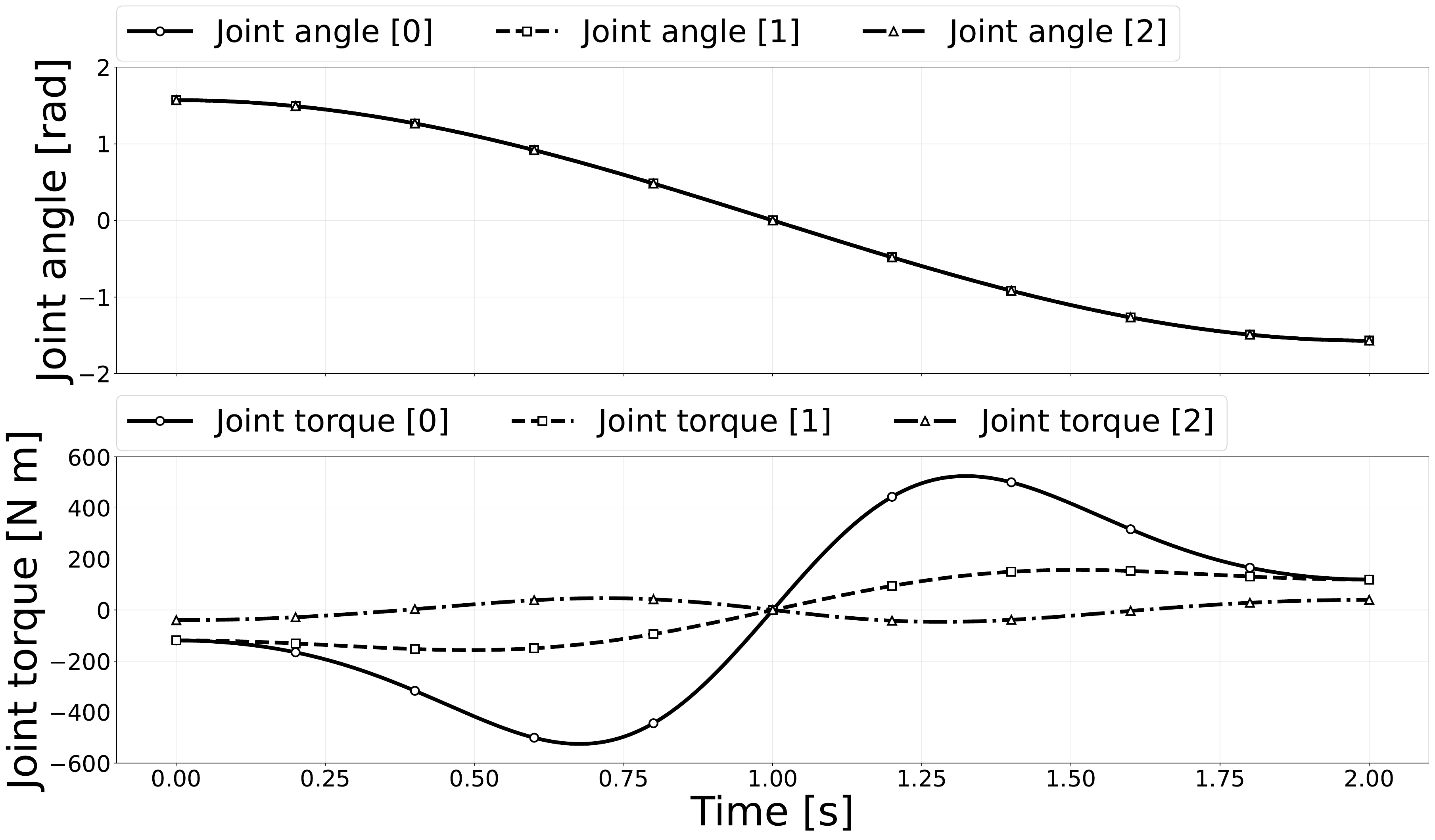}
  \caption{Condition Minimization of joint velocity, acceleration, and jerk.}
\end{subfigure}

\vspace{0.5em}

\begin{subfigure}[b]{\columnwidth}
  \centering
  \includegraphics[width=\linewidth]{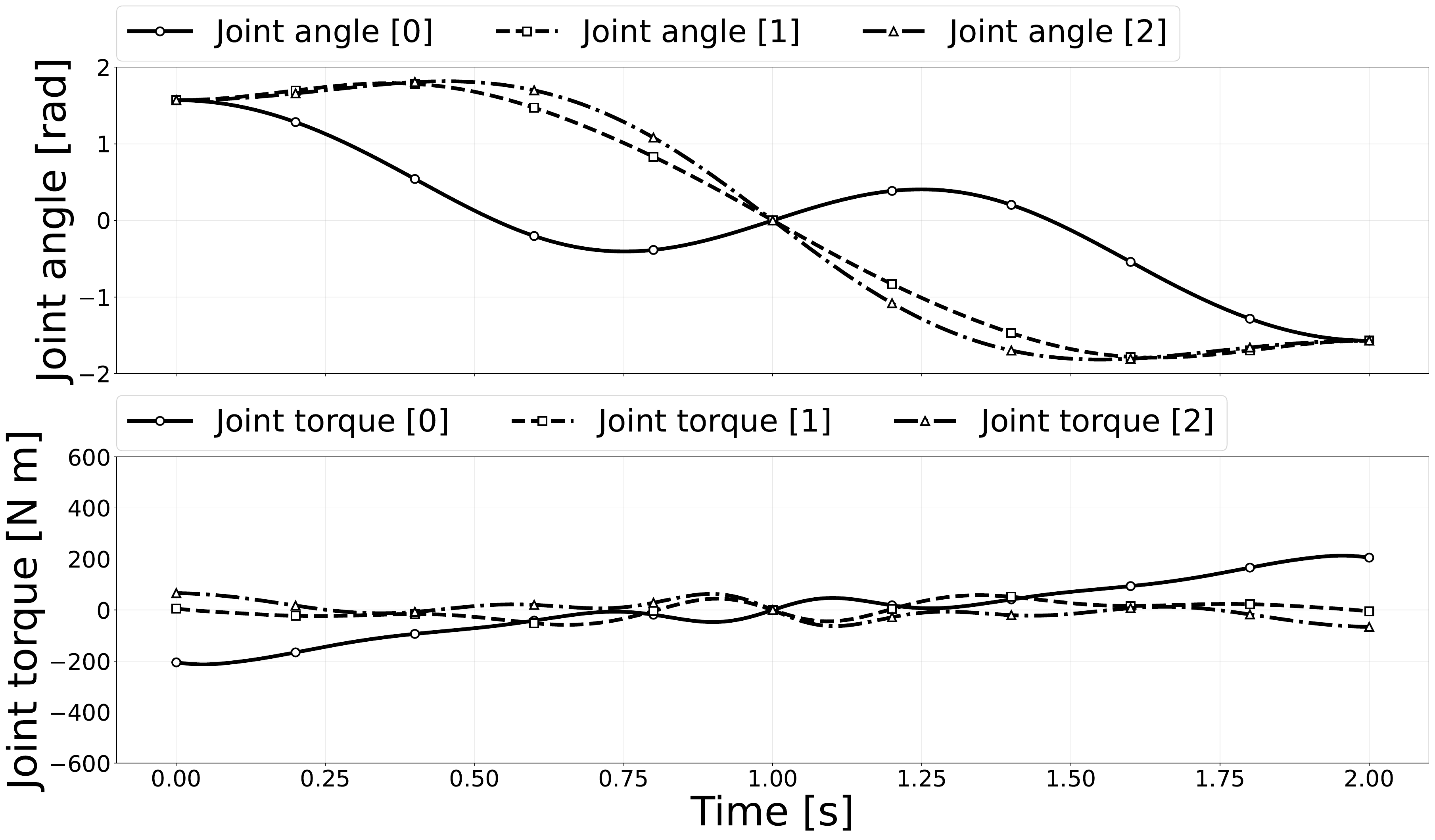}
  \caption{Condition (a) with additional torque minimization.}
\end{subfigure}

\vspace{0.5em}

\begin{subfigure}[b]{\columnwidth}
  \centering
  \includegraphics[width=\linewidth]{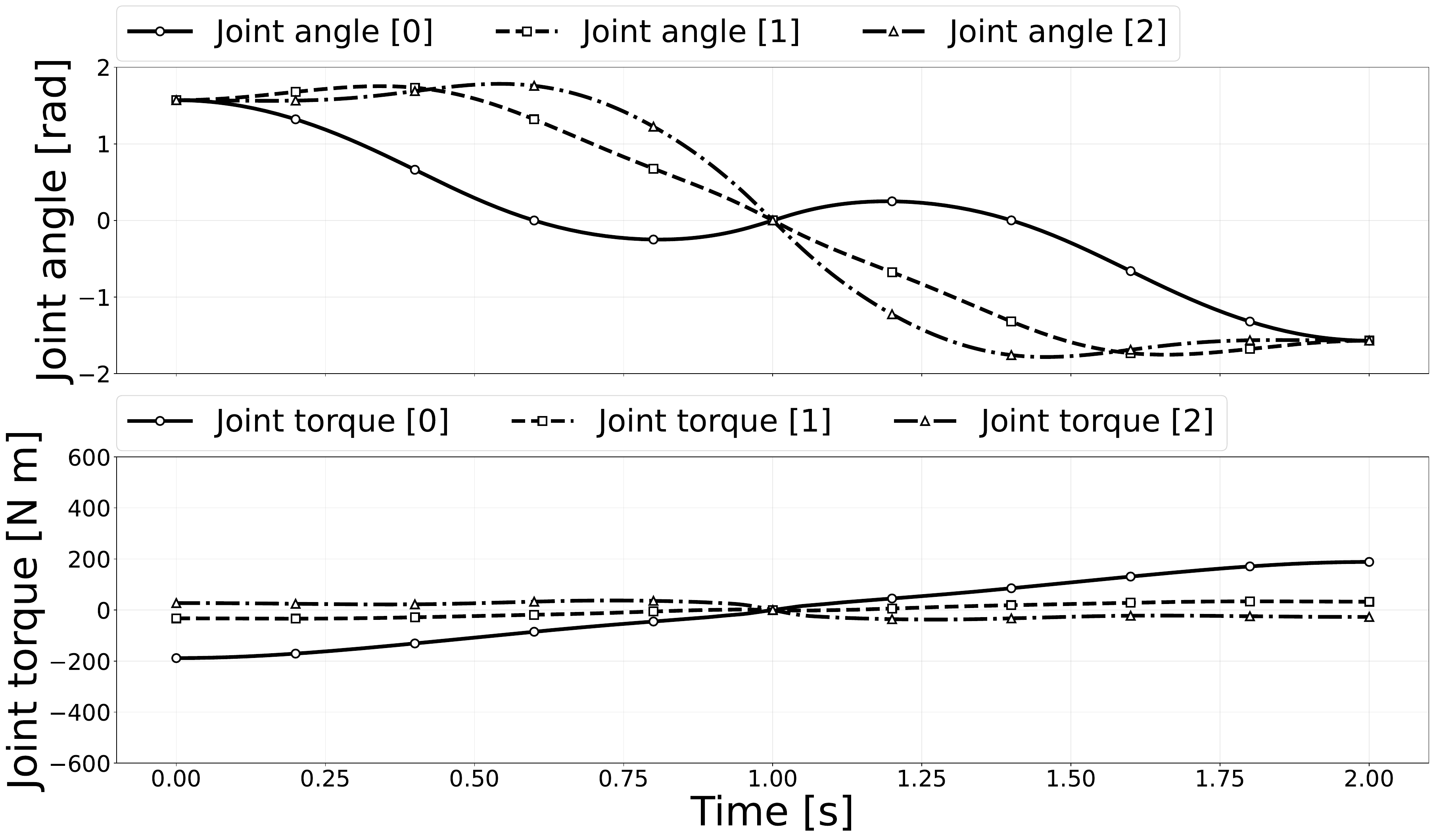}
  \caption{Condition (b) with additional minimization of torque rate.}
\end{subfigure}

\caption{
Motion optimization results for a 3-DOF planar robot under progressively augmented cost functions.
}
\label{fig:cost_vertical}
\end{figure}

\begin{figure}
    \centering
    \includegraphics[width=\linewidth]{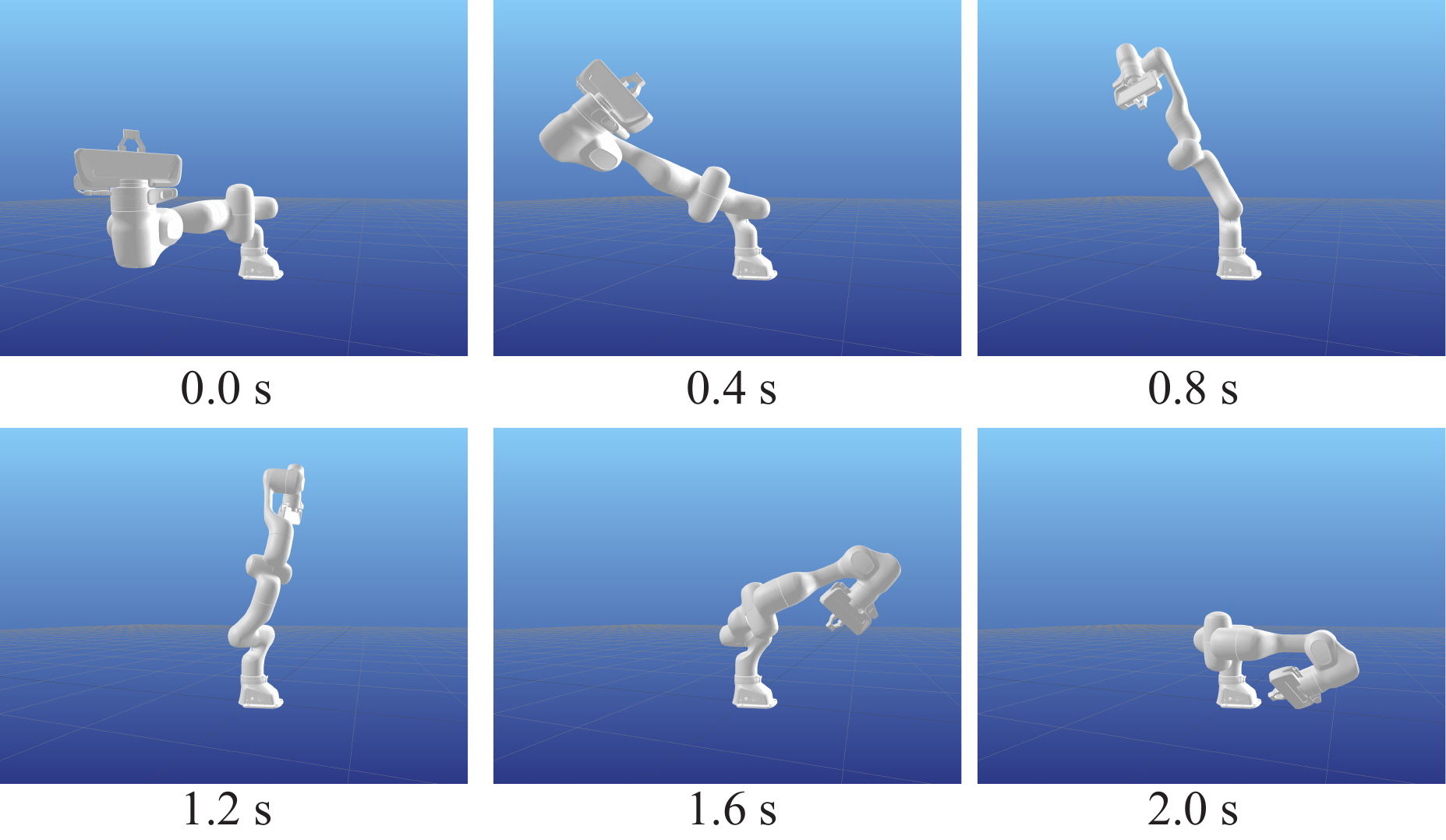}
    \caption{Snapshots of the simulation for the 7-DoF robot.}
    \label{fig:7link_robotsim}
\end{figure}

\begin{figure}
    \centering    
    \includegraphics[width=0.9\linewidth]{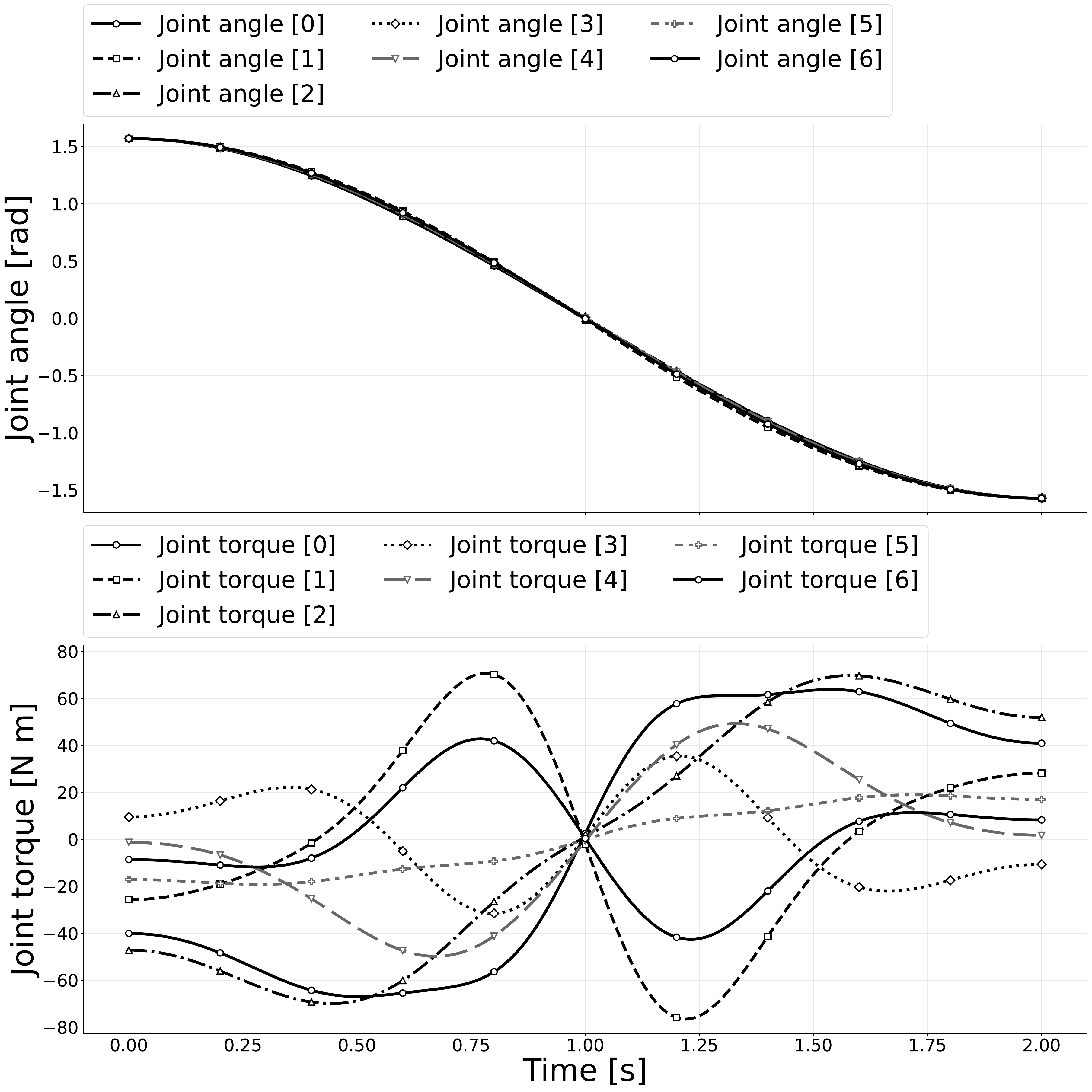}
    \caption{Motion optimization results for a 7-DOF robot.}
    \label{fig:7dof_robot_doc_result}
\end{figure}













\section{Conclusion}

In this paper, we achieved the following contributions:

\begin{enumerate}
\item In this paper, we presented a structured formulation for Jacobian computation in motion optimization involving higher-order time derivatives in multi-link systems.
The main theoretical components of the proposed framework are summarized as follows:

\begin{itemize}

\item Higher-order kinematic and dynamic quantities were systematically represented using the comprehensive motion transformation matrix (CMTM), enabling unified treatment of quantities and their time derivatives along the multi-link structure.

\item Tangent representations of CMTM were introduced to formulate the relationship between higher-order time derivatives and variations.

\item Structured variational relations were formulated for physical quantities ranging from generalized coordinates to generalized forces through propagations over both the multi-link structure and the dependency relations among physical quantities.

\item Jacobian computation was formulated as a structured propagation of variations through compositions of linear mappings associated with dependencies among physical quantities.

\item We constructed a structured optimization framework
for motion optimization involving higher-order time-derivative
physical quantities in multi-link systems, and demonstrated
its applicability to both direct and inverse optimization problems.
\end{itemize}

\item The proposed analytical Jacobian framework was evaluated through both numerical validation of analytical Jacobians and applications to direct and inverse motion optimization problems involving higher-order time-derivative quantities.
The main experimental findings are summarized as follows:

\begin{itemize}

\item The proposed analytical Jacobians achieved accuracy comparable to automatic differentiation across various robot configurations, while maintaining discrepancies with respect to numerical differentiation within the order of $10^{-9}$ to $10^{-7}$ depending on the derivative order.
The results further indicated that the dominant source of discrepancy originates from numerical differentiation rather than from the proposed analytical formulation itself.

\item The proposed framework was applied to direct and inverse motion optimization problems on 3-DoF and 7-DoF robot arms using kinematic and dynamic cost functions involving velocities, accelerations, jerks, torques, and torque rates.

\item Successful inverse optimal control was demonstrated using the proposed analytical Jacobians, recovering cost weights with estimation errors on the order of $10^{-8}$--$10^{-6}$.

\end{itemize}
\end{enumerate}

These results demonstrate that the proposed framework provides an efficient and reliable foundation for motion optimization involving higher-order dynamics. 
In particular, by explicitly exploiting the multi-link structure and the dependency among physical quantities, the proposed formulation enables systematic propagation of Jacobians across kinematic and dynamic variables, which is not directly achieved by conventional differentiation-based approaches.

The proposed framework also opens up several directions for future work. 
First, the structured Jacobian formulation can be integrated with learning-based methods, such as neural networks, enabling hybrid approaches that combine model-based optimization with data-driven representations. 
Second, the framework can be extended to more complex systems involving contact, closed-loop kinematics, or flexible bodies. 
In particular, extensions to flexible rod models and their associated differentiation and optimization problems, as studied in \cite{ishigaki2024comprehensive, ishigaki2025unified, mathew2025analytical}, are promising directions, where higher-order consistency and structured computation play a crucial role. 
Finally, the proposed formulation provides a basis for scalable optimization in high-dimensional systems, suggesting its applicability to whole-body control and large-scale robotic systems.

\section*{Funding}
This work was supported by JSPS KAKENHI Grant Numbers JP22H05002, JP25K21310.

\bibliographystyle{plain}
\bibliography{bib/ijrr_doc_ioc, bib/robotics_computation, bib/inverse_optimal_control}

\appendix
\section{Mathematical Foundations}

\subsection{Calculation of inverse variation \label{app:inv_var}}

\paragraph{Variation of the inverse matrix.}
For a Lie group matrix $\bm{X}$, the variation of its inverse is given by
\begin{align}
    \delta \bm{X}^{-1} 
    = 
    - \bm{X}^{-1} \delta \bm{X} \bm{X}^{-1},
\end{align}
which is obtained by taking the variation of $\bm{X}\bm{X}^{-1}=\eyem$ as
\begin{align}
    \delta \bm{X} \bm{X}^{-1} + \bm{X} \delta \bm{X}^{-1} = \zerom,
\end{align}
and solving for $\delta \bm{X}^{-1}$.

\paragraph{Variation of transformed vector.}
The variation of the transformation $\bm{X}^{-1}$ applied to a vector $\bm{v}$ is given by
\begin{align}
    \label{var_inv_trans_vec}
    (\delta \bm{X}^{-1} )\bm{v}
    =
    - \bm{X}^{-1} \delta \bm{X} \bm{X}^{-1} \bm{v}
    =
    - [(\bm{X}^{-1}\bm{v}) \hat{\times}] \delta \bm{x},
\end{align}
where $\delta \bm{X} = \bm{X}[\delta \bm{x} \times]$ is used.

\subsection{Numerical derivative of Lie group matrix}

Consider the mapping
\begin{align}
    \bm{y}(\bm{x}) = \bm{X}(\bm{x}) \bm{v},
\end{align}
where $\bm{X}(\bm{x}) \in G$ is parameterized by Lie algebra coordinates 
$\bm{x} \in \mathbb{R}^n$, and $\bm{v}$ is fixed.

At $\bm{x} = \bm{0}$, we have
\begin{align}
    \bm{y}(\bm{0}) = \bm{X}\bm{v}.
\end{align}

For a small perturbation $\delta \bm{x}$, the group element is approximated as
\begin{align}
    \bm{X}(\delta \bm{x})
    \approx
    \bm{X}(\bm{I} + [\delta \bm{x} \times]).
\end{align}
Thus, the perturbed output becomes
\begin{align}
    \bm{y}(\delta \bm{x})
    &=
    \bm{X}(\delta \bm{x}) \bm{v} \\
    &\approx
    \bm{X}(\bm{I} + [\delta \bm{x} \times]) \bm{v} 
    =
    \bm{X}\bm{v} + \bm{X}[\delta \bm{x} \times]\bm{v}.
\end{align}

Therefore, the variation is given by
\begin{align}
    \delta \bm{y}
    =
    \bm{y}(\delta \bm{x}) - \bm{y}(\bm{0})
    =
    \bm{X} [\delta \bm{x} \times] \bm{v}.
\end{align}

To compute the Jacobian numerically, we evaluate this variation along each basis direction.
Let $\bm{e}_i$ denote the $i$-th basis vector and set
\begin{align}
    \delta \bm{x} = \epsilon \bm{e}_i,
\end{align}
where $\epsilon$ is a small scalar.
Substituting this into the variation yields
\begin{align}
    \delta \bm{y}_i
    =
    \bm{X} [\epsilon \bm{e}_i \times] \bm{v}.
\end{align}

The $i$-th column of the Jacobian is then approximated by
\begin{align}
    \frac{\partial \bm{y}}{\partial x_i}
    \approx
    \frac{\delta \bm{y}_i}{\epsilon}
    =
    \bm{X} [\bm{e}_i \times] \bm{v}.
\end{align}

Finally, by stacking these column vectors, the Jacobian matrix is obtained as
\begin{align}
    \frac{\partial \bm{y}}{\partial \bm{x}}
    &=
    \begin{bmatrix}
        \dfrac{\partial \bm{y}}{\partial x_1} &
        \dfrac{\partial \bm{y}}{\partial x_2} &
        \cdots &
        \dfrac{\partial \bm{y}}{\partial x_n}
    \end{bmatrix}
    \notag
    \\
    &\approx
    \begin{bmatrix}
        \bm{X} [\bm{e}_1 \times] \bm{v} &
        \bm{X} [\bm{e}_2 \times] \bm{v} &
        \cdots &
        \bm{X} [\bm{e}_n \times] \bm{v}
    \end{bmatrix}.
\end{align}
\section{Properties of CMTM and Tangent Mapping}
\subsection{Inverse of CMTM}
The inverse of the CMTM can be computed as follows
\begin{align}
    \cmtm^{-1} &:= \cmtmat{\cmtpm^{-}},
    \\
    &\begin{cases}
        \cmtpm^{-}_0 = \spm^{-1}
        \\
        \cmtpm^{-}_{\ell+1}
        = 
        \frac{-1}{\ell+1} 
        \sum_{m=0}^\ell
        \adjsp{\spv^{(m|t!)}} \cmtpm^{-}_{\ell-m}
     \end{cases}.
\end{align}
As a property of the inverse matrix $\cmtm^{-1}$, the following relation holds:
\begin{align}
    \cmtm \cmtm^{-1} = \cmtm^{-1} \cmtm = \eyem.
\end{align}
On the other hand, it should be noted that the following relation holds when $k \neq 0$,
\begin{subequations}
\begin{align}
&\cmtpm_k^{-} \cmtpm_k \neq \eyem,\\
&\cmtpm_k \cmtpm_k^{-} \neq \eyem,\\
&\cmtpm_k \cmtpm_k^{-} \neq \cmtpm_k^{-} \cmtpm_k .
\end{align}
\end{subequations}
\subsection{CMTM Tangent Matrix \label{apdx_cmtm_tan_map}}
The explicit form of $\cmtanmap$ is given as follows
\begin{align}
    &\cmvec{\cmtanmap}{(\dindx)} =
    \begin{bmatrix}
        \cmtanmap_{00} & \zerom_6 & \cdots \\
        \cmtanmap_{10} & \cmtanmap_{11} & \zerom_6
        & \cdots \\
        \vdots & \vdots & \ddots \\
        \cmtanmap_{k0} & \cmtanmap_{k1} & 
        \cdots & \cmtanmap_{kk}
    \end{bmatrix}
    \\
    &=
    \begin{bmatrix}
        \eyem_6 & \zerom_6 & \cdots \\
        \adjsp{\spv^{(0)}} & \eyem_6 & \zerom_6 & \cdots \\
        \adjsp{\spv^{(1)}} & \adjsp{\spv^{(0)}} & 2\eyem_6 & \zerom_6 & \cdots \\
        \vdots & \vdots & \ddots & \ddots & \ddots \\
        \adjsp{\spv^{(\dindx)}} & \adjsp{\spv^{(\dindx-1)}} & \cdots & \adjsp{\spv^{(0)}} & \dindx\eyem_6 \\
    \end{bmatrix},
    \notag
    \\
    \quad&\begin{cases}
        \cmtanmap_{00} = \eyem_6 \\
        \cmtanmap_{\ell\ell} = \ell\eyem_6 \\
        \cmtanmap_{\ell m} = \adjsp{\spv^{(\ell-m)}}
    \end{cases}.
\end{align}
Since $\cmtanmap$ is a lower block triangular matrix and admits an inverse,  
$\cmtanmap^{-1}$ can be computed recursively as follows
\begin{align}
    \cmtanmap^{-1} &=
    \begin{bmatrix}
        \cmtanmap_{00}^{-} & \zerom_6 & \cdots \\
        \cmtanmap_{10}^{-} & \cmtanmap_{11}^{-} & \zerom_6
        & \cdots \\
        \vdots & \vdots & \ddots \\
        \cmtanmap_{k0}^{-} & \cmtanmap_{k1}^{-} & 
        \cdots & \cmtanmap_{kk}^{-}
    \end{bmatrix}.
\end{align}
This structure reflects the causal dependency of higher-order tangent quantities on lower-order ones, allowing the inverse mapping to be constructed in a forward substitution manner
\begin{align}
    \cmtanmap_{\ell\ell}^{-} &= \cmtanmap_{\ell\ell}^{-1} ,\\
    \cmtanmap_{\ell m}^{-}
    &= 
    -\cmtanmap_{\ell\ell}^{-1} 
    \sum_{n = m}^{\ell-1}
    \cmtanmap_{\ell n} \cmtanmap_{n m}^{-}
    \notag
    \\
    &=         
    -\frac{1}{l}
    \sum_{n = m}^{\ell-1}
    \adjsp{\spv^{(\ell-n-1)}}
    \cmtanmap_{n m}^{-}.
\end{align}
Each block $\cmtanmap_{\ell m}^{-}$ represents the contribution of the $m$-th order variation to the $\ell$-th order tangent component.
Using this inverse mapping, the stacked tangent vector up to order $(\dindx+1)$ can be recovered from the variation of the base quantities
\begin{align}
    \label{cm_tan_map_inv}
    \cmvec{\cmtanv}{(\dindx+1)}
    =
    \cmvec{\cmtanmap}{(\dindx)}^{-1}
    \begin{bmatrix}
        \delta \bm{a} \\
        \delta \cmvec{\cmv}{(\dindx)}
    \end{bmatrix}.
\end{align}

\section{Whole-body comprehensive motion computation \label{app:wbcmtm}}
We formulate a unified linear structure that describes the relationships among whole-body quantities, including spatial velocities, momenta, forces, and joint torques.
These quantities are represented as stacked vectors over all links and joints, and their relationships are expressed through structured block matrices induced by the kinematic tree.
This formulation reveals that all physical quantities can be computed through a sequence of linear mappings sharing a common sparsity pattern.

Here, $\dof$ denotes the number of links.  
The vectors obtained by collecting a physical quantity $\bm{x}$ associated with each link and each joint are expressed as follows
\begin{align}
    \label{def_total_express}
    \bm{x}_\link
    &=
    \begin{bmatrix}
        \bm{x}_{\link_0}^\trans & \cdots & \bm{x}_{\link_\dof}^\trans
    \end{bmatrix}^\trans,
    \\
    \bm{x}_\joint
    &=
    \begin{bmatrix}
        \bm{x}_{\joint_0}^\trans & 
        \cdots & \bm{x}_{\joint_n}^\trans
    \end{bmatrix}^\trans.
\end{align}
The matrix ${}^\joint\cmtm_\link$ encodes the local kinematic dependencies between parent and child links into a global linear operator.
Each row corresponds to a joint, and is constructed only from its own link and its parent link, resulting in a sparse block structure.

The relationship among the spatial velocities of all links and joints in the robot can be expressed as follows
\begin{align}
    \label{wcm_rel_jl}
    \cmv_\joint = {}^\joint\cmtm_\link \cmv_\link.
\end{align}
Here, an element ${{}^\joint\cmtm_\link}_{\{i,j\}}$ of ${}^\joint\cmtm_\link$ is expressed as follows
\begin{align}
    {{}^\joint\cmtm_\link}_{\{i,j\}}
    :=
    \begin{cases}
        \eyem & (i = j) \\
        -{}^i\cmtm_j & (j = \parent{i}) \\
        \zerom & \text{otherwise}.
    \end{cases}
\end{align}
$p(i)$ denotes the index of the parent link of link $i$.
In the case of a serial-link system, \eqref{wcm_rel_jl} take the form as
\begin{align}
    \label{serical_wcm_jl}
    \underbrace{
    \begin{bmatrix}
            \cmv_{\joint_0} \\
            \cmv_{\joint_1} \\
            \cmv_{\joint_2} \\
		\vdots \\
		\cmv_{\joint_n}
	\end{bmatrix}
    }_{\cmv_\joint}
	&= 
    \underbrace{
	\begin{bmatrix}
            \eyem & \zerom & \cdots & & \\
		-{}^{1}\cmtm_{0} & \eyem & \zerom & \cdots & \\
		\zerom & -{}^{2}\cmtm_{1} & \eyem & \zerom & \cdots \\
		\vdots & \ddots & \ddots & \ddots & \ddots & \\
		\zerom & \cdots & \zerom & -{}^{n}\cmtm_{n-1} & \eyem
	\end{bmatrix}
    }_{{}^{\joint}\cmtm_{\link}}
    \underbrace{
	\begin{bmatrix}
		\cmv_{\link_0}  \\
		\cmv_{\link_1} \\
            \cmv_{\link_2} \\
		\vdots\\
		\cmv_{\link_n}
	\end{bmatrix}
    }_{\cmv_\link}.
\end{align}
An element of inverse mapping of ${}^{\joint}\cmtm_{\link}$, denoted by ${{}^{\link}\cmtm_{\joint}}_{\{i,j\}}$, is expressed as follows,
\begin{align}
    {{}^\link\cmtm_\joint}_{\{i,j\}}
    :=
    \begin{cases}
        \eyem & (i = j) \\
        -{}^i\cmtm_j & (j = \parents{i}) \\
        \zerom & \text{otherwise}.
    \end{cases}
\end{align}
Same as \eqref{serical_wcm_jl}, 
in the case of a serial-link system without branching,
inverse relationship of \eqref{wcm_rel_jl} can be explicitly expressed as follows
\begin{align}
    \label{serical_wcm_lj}
    \underbrace{
	\begin{bmatrix}
		\cmv_{\link_0} \\
		\cmv_{\link_1} \\
            \cmv_{\link_2} \\
		\vdots\\
		\cmv_{\link_n}
	\end{bmatrix}
    }_{\cmv_\link}
	= 
    \underbrace{
	\begin{bmatrix}
            \eyem & \zerom & \cdots & & \\
		{}^{1}\cmtm_{0} & \eyem & \zerom & \cdots & \\
		{}^{2}\cmtm_{0} & {}^{2}\cmtm_{1} & \eyem & \zerom & \cdots & \\
            \vdots & \vdots &  & \ddots & \ddots\\
            {}^{n}\cmtm_{0} & {}^{n}\cmtm_{1} & \cdots & {}^{n}\cmtm_{1} & \eyem \\
	\end{bmatrix}
    }_{{}^{\link}\cmtm_{\joint}}
    \underbrace{
	\begin{bmatrix}
            \cmv_{\joint_0} \\
            \cmv_{\joint_1} \\
            \cmv_{\joint_2} \\
		\vdots \\
		\cmv_{\joint_n}
	\end{bmatrix}
    }_{\cmv_\joint}.
\end{align}
Similarly to \eqref{def_cmtm_moment}, the relationship between $\cmmom_{\link}$ and $\cmv_{\link}$ can be expressed as follows
\begin{align}
    \cmmom_{\link}
    &=
    \cminertia_{\link}
    \cmv_{\link}, \\
    \cminertia_{\link}
    &:=
    \text{diag}(
    \begin{bmatrix}
        \cminertia_0 &
        \cdots &
        \cminertia_n
    \end{bmatrix}
    ).
\end{align}

From \eqref{cm_moment_dynamics}, the moment can also be expressed as follows
\begin{align}
    \label{total_momentum_link_joint_relation}
    \cmmom_\link 
    &= 
    {}^\link\cmtm_\joint^{\ast} 
    \cmmom_\joint,
    \\
    {{}^\link\cmtm_\joint^{\ast}}_{\{i,j\}}
    &:=
    \begin{cases}
        \eyem & (i = j) \\
        -{}^i\cmtm_j^{\ast} & (j = \child{i}) \\
        \zerom & \text{otherwise}.
    \end{cases}
\end{align}
$\child{i}$ denotes the index of the child link of link $i$.
As a concrete example for a serial-link system, this relationship can be expressed as
\begin{align}
    \underbrace{
	\begin{bmatrix}
            \cmmom_{\link_0} \\
            \cmmom_{\link_1} \\
		\vdots \\
            \cmmom_{\link_{n-1}} \\
		\cmmom_{\link_n}
	\end{bmatrix}
    }_{\cmmom_\link}
	&= 
    \underbrace{
	\begin{bmatrix}
            \eyem & -{}^{0}\cmtm_{1}^{\ast} & \zerom & \cdots  \\
		  \zerom & \eyem & -{}^{1}\cmtm_{2}^{\ast}  
            & \zerom & \cdots \\
		\vdots & \ddots & \ddots & \ddots & \ddots  \\
            \zerom & \cdots & \zerom & \eyem & -{}^{n-1}\cmtm_{n}^{\ast} \\
		\zerom & & \cdots & \zerom & \eyem
	\end{bmatrix}
    }_{{}^{\link}\cmtm_\joint^{\ast}}
    \underbrace{
	\begin{bmatrix}
		\cmmom_{\joint_0}  \\
		\cmmom_{\joint_1} \\
		\vdots\\
            \cmmom_{\joint_{n-1}} \\
		\cmmom_{\joint_n}
	\end{bmatrix}
    }_{\cmmom_\joint}.
\end{align}
Its inverse mapping is given as follows
\begin{align}
    {{}^\joint\cmtm_\link^{\ast}}_{\{i,j\}}
    &:=
    \begin{cases}
        \eyem & (i = j) \\
        {}^i\cmtm_j^{\ast} & (j = \children{i}) \\
        \zerom & \text{otherwise}.
    \end{cases}
\end{align}
In the case of serial-link system, inverse relationship of \eqref{total_momentum_link_joint_relation} as follows
\begin{align}
    \underbrace{
	\begin{bmatrix}
		\cmmom_{\joint_0} \\
		\cmmom_{\joint_1} \\
		\vdots\\
            \cmmom_{\joint_{n-1}} \\
		\cmmom_{\joint_n}
	\end{bmatrix}
    }_{\cmmom_\joint}
	&= 
    \underbrace{
	\begin{bmatrix}
            \eyem & 
            {}^{0}\cmtm_{1}^{\ast} & \cdots & {}^{0}\cmtm_{n-1}^{\ast} & {}^{0}\cmtm_{n}^{\ast} \\
		      \zerom & \eyem & {}^{1}\cmtm_{2}^{\ast} & 
            \cdots & {}^{1}\cmtm_{n}^{\ast} \\
            \vdots & \ddots & \ddots & \ddots & \vdots\\
            \zerom & \cdots & \zerom & \eyem & {}^{n-1}\cmtm_{n}^{\ast}\\
            \zerom &  & \cdots & \zerom & \eyem \\
	\end{bmatrix}
    }_{{}^\joint\cmtm_\link^{\ast}}
    \underbrace{
	\begin{bmatrix}
            \cmmom_{\link_0} \\
            \cmmom_{\link_1} \\
            \vdots \\
            \cmmom_{\link_{n-1}} \\
		\cmmom_{\link_n}
	\end{bmatrix}
    }_{\cmmom_\link}.  
\end{align}

With respect to the moment expressed in the world coordinate frame, it can be formulated as follows based on \eqref{wcm_moment_dynamics}
\begin{align}
\label{total_world_momentum_link_joint_relation}
    \wcmmom_{\link}
    &=
    {}^\link\wcmtm_\joint^{\ast} 
    \wcmmom_{\joint},
    \\
    \label{total_world_momentum_link_joint_map}
    {{}^\link\wcmtm_\joint^{\ast} }_{\{i,j\}}
    &:=
    \begin{cases}
        \eyem & (i = j) \\
        -\eyem & (j = \children{i}) \\
        \zerom & \text{otherwise}.
    \end{cases}
\end{align}
Since \eqref{wcm_moment_dynamics} is expressed as a simple summation, ${{}^\link\wcmtm_\joint^{\ast} }_{\{i,j\}}$ is composed of identity matrices and zero matrices.
In the world coordinate representation, the transformation matrices are further simplified into combinations of identity and zero matrices.
This eliminates dependency on local coordinate transformations and results in a purely additive structure, which is particularly advantageous for differentiation and optimization.
As a concrete example for a serial-link system, this relationship is as
\begin{align}
\underbrace{
	\begin{bmatrix}
        \wcmmom_{\link_0} \\
        \wcmmom_{\link_1} \\
		\vdots \\
        \wcmmom_{\link_{n-1}} \\
		\wcmmom_{\link_n}
	\end{bmatrix}
    }_{\wcmmom_\link}
	&= 
    \underbrace{
	\begin{bmatrix}
        \eyem & -\eyem & \zerom & \cdots & \\
		  \zerom & \eyem & -\eyem & \zerom & \cdots \\
		\vdots & \ddots & \ddots & \ddots & \ddots & \\
        \zerom & \cdots & \zerom & \eyem & -\eyem \\
		\zerom & & \cdots & \zerom & \eyem
	\end{bmatrix}
    }_{{}^\link\wcmtm_\joint}
    \underbrace{
	\begin{bmatrix}
		\wcmmom_{\joint_0}  \\
		\wcmmom_{\joint_1} \\
		\vdots\\
        \wcmmom_{\joint_{n-1}} \\
		\wcmmom_{\joint_n}
	\end{bmatrix}
    }_{\wcmmom_\joint},
\end{align} 
and the inverse relationship is as
\begin{align}
\underbrace{
	\begin{bmatrix}
		\wcmmom_{\joint_0}  \\
		\wcmmom_{\joint_1} \\
		\vdots\\
        \wcmmom_{\joint_{n-1}} \\
		\wcmmom_{\joint_n}
	\end{bmatrix}
    }_{\wcmmom_\joint}
	&= 
    \underbrace{
	\begin{bmatrix}
        \eyem & \eyem & \cdots & \cdots & \eyem \\
		  \zerom & \eyem & \eyem & \cdots & \eyem\\
		\vdots & \ddots & \ddots & \ddots & \vdots & \\
        \zerom & \cdots & \zerom & \eyem & \eyem \\
		\zerom & & \cdots & \zerom & \eyem
	\end{bmatrix}
    }_{{}^\joint\wcmtm_\link}
    \underbrace{
    \begin{bmatrix}
        \wcmmom_{\link_0} \\
        \wcmmom_{\link_1} \\
		\vdots \\
        \wcmmom_{\link_{n-1}} \\
		\wcmmom_{\link_n}
	\end{bmatrix}
    }_{\wcmmom_\link}.
\end{align}
The transformation between the local coordinate frame and the world coordinate frame is given by the following equations
\begin{align}
    \label{total_world_moment_trans}
    \wcmmom_{\link} 
    &=
    {}^{\world}\cmtm_{\link}^{\ast}
    \cmmom_{\link},
   \\
   {}^{\text{w}}\cmtm_{\link} &:=
   \text{diag}
   (
   \begin{bmatrix}
       \cmtm_{\link_0} &
       \cdots &
       \cmtm_{\lnum} 
   \end{bmatrix}
   ) ,
    \\
    \label{total_world_joint_moment_trans}
    \wcmmom_{\joint} 
    &=
    {}^{\text{w}}\cmtm_{\joint}^{\ast}
    \cmmom_{\joint},
   \\
   {}^{\text{w}}\cmtm_{\joint} &:=
   \text{diag}
   (
   \begin{bmatrix}
       \cmtm_{\joint_0} &
       \cdots &
       \cmtm_{\jnum}
   \end{bmatrix}
   ).
\end{align}
By using \eqref{total_world_momentum_link_joint_relation} and \eqref{total_world_moment_trans},  
the following equation, which is equivalent to \eqref{total_momentum_link_joint_relation}, is obtained
\begin{align}
   \label{total_momentum_link_joint_relation2}
   \cmmom_{\link} 
   &=
   {{}^{\world}\cmtm_{\link}^{\ast}}^{-1}
   ({}^\link\wcmtm_\joint^{\ast})
   {}^{\world}\cmtm_{\joint}^{\ast}
   \cmmom_{\joint}.
\end{align}
That is, by comparing \eqref{total_momentum_link_joint_relation} and \eqref{total_momentum_link_joint_relation2}, the following relationship holds
\begin{align}
   {}^\link\cmtm_\joint^{\ast}
   &=
   {{}^{\world}\cmtm_{\link}^{\ast}}^{-1}
   ({}^\link\wcmtm_\joint^{\ast})
   {}^{\world}\cmtm_{\joint}^{\ast}.
\end{align}
\eqref{total_momentum_link_joint_relation2} expresses the transformation matrix in terms of the composite motion transformation matrices of each link observed from the world coordinate frame, together with identity and zero matrices. This representation facilitates the subsequent differentiation required for optimization.

Similarly, the transformation from momentum to force described in \eqref{cm_force_momentum}, \eqref{link_mom_to_force}, and \eqref{joint_mom_to_force} can be computed as follows
\begin{align}
    \label{total_force_moment}
    \cmvec{{\cmforce_\link}}{(\dindx)}
    &=
    \cmvec{{\cnatm_\link}}{(\dindx)}
    \cmvec{{\cmmom_\link}}{(\dindx+1)},
    \\
    \cmvec{{\cmforce_\joint}}{(\dindx)}
    &=
    \cmvec{{\cnatm_\joint}}{(\dindx)}
    \cmvec{{\cmmom_\joint}}{(\dindx+1)},
    \\
    \cmvec{{\cnatm_\link}}{(\dindx)} 
    &:=
    \text{diag}(
    \begin{bmatrix}
        \bm{\mathfrak{U}}(\cadjcm{\cmvec{{\cmv_{\link_0}}}{(\dindx)}{}})
        & \cdots & 
        \bm{\mathfrak{U}}(\cadjcm{\cmvec{{\cmv_{\link_n}}}{(\dindx)}{}})
    \end{bmatrix}
    ),
    \\
    \cmvec{{\cnatm_\joint}}{(\dindx)} 
    &:=
    \text{diag}(
    \begin{bmatrix}
        \bm{\mathfrak{U}}(\cadjcm{\cmvec{{\cmv_{\link_0}}}{(\dindx)}{}})
        & \cdots & 
        \bm{\mathfrak{U}}(\cadjcm{\cmvec{{\cmv_{\link_n}}}{(\dindx)}{}})
    \end{bmatrix}.
\end{align}
The following equation also holds for the joint torques.
\begin{align}
    \label{total_torque_force}
    \cmtorque_{\joint} 
    &=
    \cmslctm_{\joint}^{\text{T}}
    \cmforce_{\joint},
    \\
    \cmslctm_{\joint}
    &:=
    \text{diag}(
    \begin{bmatrix}
        \cmslctm_{\joint_0} &
        \cdots &
        \cmslctm_{\jnum}
    \end{bmatrix}
    ).
\end{align}

All subsequent quantities, including momentum, force, and joint torque, can be derived using the same structured linear mappings.
The differences lie only in the choice of transformation operators, while the underlying computational structure remains identical.
\section{B-spline Trajectory Representation \label{app:bspline}}
In motion optimization involving higher-order time-derivative physical quantities, joint trajectories are often represented using B-splines in order to ensure smoothness of the trajectory and its higher-order derivatives while maintaining a compact parametrization \cite{ayusawa2019predictive,suleiman2008human}. 
The optimization variables are defined as the control points of the spline representation.

\subsection{B-spline Basis Functions}

Let the joint trajectory be represented as
\begin{align}
\bm{q}(t)
=
\sum_{i=0}^{n_c-1}
b_{i,p}(t)\bm{m}_i,
\end{align}
where $b_{i,p}(t)\in\mathbb{R}$ denotes the $p$-th degree B-spline basis function, $\bm{m}_i\in\mathbb{R}^{n_q}$ is the $i$-th control point, $n_c$ is the number of control points, and $n_q$ is the number of generalized coordinates.

The B-spline basis functions are recursively defined using the Cox--de Boor formula:
\begin{align}
b_{i,0}(t)
&=
\begin{cases}
1 & (u_i \le t < u_{i+1}), \\
0 & \text{otherwise},
\end{cases}
\\
b_{i,p}(t)
&=
\frac{t-u_i}{u_{i+p}-u_i}
b_{i,p-1}(t)
+
\frac{u_{i+p+1}-t}{u_{i+p+1}-u_{i+1}}
b_{i+1,p-1}(t),
\end{align}
where $\{u_i\}$ denotes the knot vector.

\subsection{Higher-Order Derivatives}

The derivatives of the B-spline basis functions can also be computed recursively \cite{piegl1997nurbs}:
\begin{align}
\frac{\partial}{\partial t}b_{i,p}(t)
=
\frac{p}{u_{i+p}-u_i}
b_{i,p-1}(t)
-
\frac{p}{u_{i+p+1}-u_{i+1}}
b_{i+1,p-1}(t).
\end{align}

Higher-order derivatives are obtained recursively by repeated application of the derivative relation:
\begin{align}
\parvec{b_{i,p}(t)}{k}{t}
&=
\frac{p}{u_{i+p}-u_i}
\parvec{b_{i,p-1}(t)}{k-1}{t}
\nonumber\\
&\quad-
\frac{p}{u_{i+p+1}-u_{i+1}}
\parvec{b_{i+1,p-1}(t)}{k-1}{t}.
\end{align}

Assuming simple knots, a $p$-th degree B-spline guarantees $C^{p-1}$ continuity.
Therefore, trajectories whose $k$-th derivatives remain continuous require a spline degree satisfying $p \ge k+1$.
This property enables smooth representations of higher-order time-derivative quantities.

\subsection{Linear Parameterization}

By stacking all control points into a parameter vector
\begin{align}
\bm{\theta}
=
\begin{bmatrix}
\bm{m}_0^\top &
\bm{m}_1^\top &
\cdots &
\bm{m}_{n_c-1}^\top
\end{bmatrix}^\top,
\end{align}
the joint trajectory and its higher-order derivatives can be written as linear functions of the spline parameters:
\begin{align}
\parvec{\bm{q}(t)}{k}{t}
=
\parvec{\bm{D}_p(t)}{k}{t}
\bm{\theta},
\end{align}
where $\parvec{\bm{D}_p(t)}{k}{t}$ denotes the basis matrix associated with the $k$-th derivatives of the B-spline basis functions:
\begin{align}
\label{eq:D_def}
&\parvec{\bm{D}_p}{k}{t}
=
\\
&
\begin{bmatrix}
\parvec{b_{0,p}}{k}{t}\eyem &
\parvec{b_{1,p}}{k}{t}\eyem &
\cdots &
\parvec{b_{n_c-1,p}}{k}{t}\eyem
\end{bmatrix}.
\notag
\end{align}

This linear representation enables efficient evaluation of higher-order time-derivative quantities in motion optimization while preserving continuity properties of the trajectory.

\end{document}